\newtheorem{theorem}{\textbf{Theorem}}
\theoremstyle{plain}
\begin{document}

\title{Robust  Graph Fine-Tuning with Adversarial Graph Prompting}

\author{Ziyan~Zhang, Bo~Jiang*\thanks{* Corresponding author} and Jin~Tang
	\thanks{Ziyan Zhang and Jin Tang are with the School of Computer Science and Technology,  Anhui University, Hefei 230009, China (e-mail: zhangziyanahu@163.com, ahu\_tj@163.com)
		
		Bo Jiang is with
		State Key Laboratory of Opto-Electronic Information Acquisition and Protection Technology, School of Computer Science and Technology, Anhui University
		(e-mail: jiangbo@ahu.edu.cn)
	}
}

\markboth{Journal of \LaTeX\ Class Files,~Vol.~14, No.~8, August~2021}%
{Shell \MakeLowercase{\textit{et al.}}: A Sample Article Using IEEEtran.cls for IEEE Journals}

\IEEEpubid{0000--0000/00\$00.00~\copyright~2021 IEEE}

\maketitle

\begin{abstract} 
Parameter-Efficient Fine-Tuning (PEFT) method has emerged as a dominant paradigm for adapting pre-trained GNN models to downstream tasks. However, existing PEFT methods usually exhibit significant vulnerability to various noise and attacks on graph topology and node attributes/features. 
To address this issue, for the first time, we 
propose integrating  adversarial
learning 
 into graph prompting and develop a novel Adversarial Graph Prompting (AGP) framework 
to achieve robust graph fine-tuning. 
Our AGP has two key aspects. 
\emph{First}, we propose the general problem formulation of AGP  as a min-max optimization problem and develop an alternating optimization scheme to solve it. 
For inner maximization, we propose Joint Projected Gradient Descent (JointPGD) algorithm to generate strong adversarial noise. 
For outer minimization, we employ a simple yet effective
module to learn the optimal node prompts to counteract the adversarial noise. 
\emph{Second}, 
we demonstrate that 
the proposed AGP can theoretically address both graph topology and node noise. This confirms
the versatility and robustness of our AGP fine-tuning method across various graph noise. 
Note that, the proposed AGP is a general method that can be integrated with various pre-trained GNN models to enhance their robustness on the downstream tasks. 
Extensive experiments on multiple benchmark tasks validate the robustness and effectiveness of AGP method compared to state-of-the-art methods.

\end{abstract}

\begin{IEEEkeywords}
Graph Neural Networks, Parameter-Efficient Fine-Tuning, Graph Prompt Learning, Adversarial Learning. 
\end{IEEEkeywords}

\section{Introduction}
\IEEEPARstart{R}{ecently}, Graph Neural Networks (GNNs) have emerged as the dominant approaches for learning and representing graph-structured data. They have been applied in a wide range of real-world applications, including recommendation systems~\cite{10387583}, molecular classification~\cite{GIN}, and computer vision~\cite{10638815}. Despite their success, the performance of GNNs typically depends on the availability of large-scale labeled datasets, which are often difficult and costly to obtain in practice. 
To alleviate this reliance, the widely used `pre-training \& fine-tuning' paradigm has been successfully extended to the graph learning field~\cite{graphcl,infograph,cuco}.
This paradigm first pre-trains GNNs on abundant unlabeled graph data to learn general structural patterns, and then adapts the models to downstream tasks via task-specific fine-tuning. However, full-parameter fine-tuning is computationally expensive and susceptible to catastrophic forgetting of the knowledge acquired during pre-training. 
To tackle these issues, a broad range of parameter-efficient fine-tuning (PEFT) algorithms has been explored in recent years.

PEFT provides an efficient and flexible alternative to full-parameter tuning.
They aim to adapt the frozen pre-trained GNN models to the fine-tuning downstream tasks by introducing a few learnable parameters or lightweight modules.
In the graph domain, PEFT techniques can be broadly grouped into three representative paradigms, i.e., adapter-based, LoRA-based and prompt-based methods. 
Adapter-based methods generally incorporate some compact trainable adapters into the GNN architecture to capture task-specific knowledge while preserving the representations learned from the pre-training stage~\cite{AdapterGNN,gadapter}. 
Low-rank adaptation (LoRA) methods update the model through low-rank decompositions of weight matrices, which enables parameter updates restricted to low-dimensional subspaces~\cite{GraphLoRA}.
Prompt-based approaches introduce learnable prompts that are injected into node~\cite{GPF,UGP,GGPT}, edge~\cite{EdgePrompt}, or sub-graph~\cite{Allinone} to narrow the task or structural gap between pre-training and downstream tasks. Due to their simplicity and scalability, prompt-based methods have emerged as the most widely adopted PEFT strategy in graph learning. 
\begin{figure}[!htpb]
\centering
\includegraphics[width=0.5\textwidth]{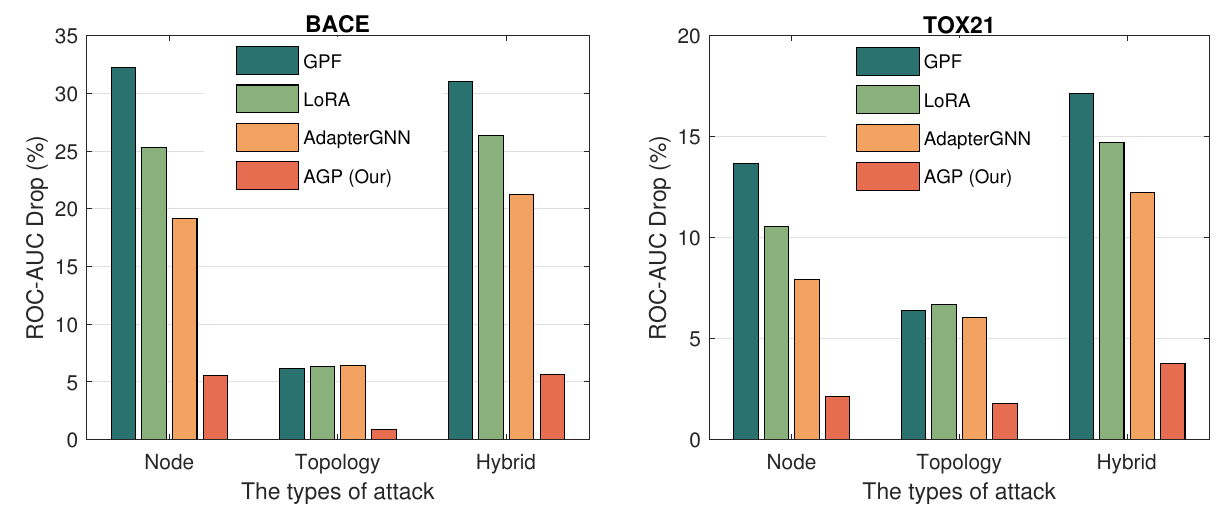}
 \caption{Comparison of ROC-AUC degradation under different adversarial attack targets (node, topology and hybrid) on BACE and TOX21 datasets. `ROC-AUC Drop' indicates the performance gap between clean and noisy baselines.
 Existing PEFT methods (GPF~\cite{GPF}, LoRA~\cite{hulora} and AdapterGNN~\cite{AdapterGNN}) show large drops across all attack types, while the proposed AGP exhibits consistently minimal degradation, demonstrating strong robustness against different types of adversarial noise. 
 }
\label{fig:acc_degradation}
\end{figure}
\IEEEpubidadjcol

In many applications, graph data in downstream applications often suffer from
various kinds of noise and attacks, such as spurious topologies or corrupted node features, stemming from environmental disturbances, human error, imperfect data processing, or malicious attacks, etc.~\cite{ASurvey}. 
However, a critical challenge for pre-trained GNN models and existing PEFT methods is their vulnerability to adversarial perturbations and various noise in graph data. This usually leads to substantial performance degradation under attacks and noise on either graph topology or node features, as shown in Fig.~\ref{fig:acc_degradation}. 
Existing PEFT research primarily emphasizes the effectiveness of knowledge transfer from pre-trained GNNs. However, the aspect of robust learning within downstream fine-tuning scenarios remains under-explored largely.
For robust PEFT, Song et al.~\cite{song2025gpromptshield} propose a robust graph prompt approach for fine-tuning against topology perturbation. 
However, 
this approach detects the noise based on some manual noise recognition rules, which fails to generalize to some stealthy adversarial perturbations. 
Also, it is limited to address the topology noise, leaving node feature and hybrid noise unaddressed. 
\emph{Therefore, how to design a \textbf{robust} PEFT framework for graph fine-tuning to defend against various (topology and node) attacks and noise remains an open problem.} 

To bridge this gap,
we propose a new robust graph prompt learning scheme, termed  \textbf{A}dversarial \textbf{G}raph \textbf{P}rompting (AGP) by incorporating adversarial learning into the graph prompt tuning problem. 
Note that, adversarial learning strategies have been usually used in the traditional GNN learning field to enhance the robustness of GNN models~\cite{GraphPGD,IG}. 
However, to our best knowledge, 
it has not been employed for robust graph prompt learning problem. 
Specifically, 
\textbf{first}, we explicitly introduce the \emph{general problem formulation} of AGP 
as a min-max
optimization problem. 
The inner maximization problem seeks to find strong adversarial noise that reduces the prompted model's performance, while the outer minimization problem aims to learn the robust graph prompts that counteract the effect of adversarial (topology and node) attacks/noise. 
\textbf{Second}, 
we demonstrate that
the proposed AGP can theoretically address both
graph topology and node noise, which confirms the versatility
and robustness of our AGP fine-tuning method across different types of
graph noise. 
\textbf{Third}, we propose an effective algorithm for our AGP problem, employing alternating optimization between inner and outer subproblems.   
For inner maximization, we derive an efficient Joint Projected Gradient Descent (JointPGD) algorithm to generate adversarial node and topology noise. 
For outer minimization, we design a simple yet effective module to learn the optimal node prompts. 

The primary contributions of this work are summarized in the following aspects:
\begin{itemize}
\item 
We integrate an adversarial learning strategy into the graph prompting problem and develop a novel Adversarial Graph Prompting (AGP) approach. 
AGP is a general scheme that can be integrated with various pre-trained GNN models to achieve robust fine-tuning on the downstream tasks. 

\item We demonstrate that the proposed AGP can theoretically address both graph topology and node noise, confirming the versatility and robustness of our AGP fine-tuning method across different types of graph noise. 

\item We derive an effective algorithm to solve the proposed AGP problem by employing
alternating optimization between inner and outer subproblems. 
For inner maximization, we derive a new Joint Projected Gradient Descent (JointPGD) algorithm to generate adversarial both topology and node noise efficiently.


\end{itemize}

To evaluate the effectiveness of the proposed AGP method, we integrate it into several pre-trained GNN models on various downstream tasks.   
Extensive experiments on multiple benchmark tasks validate the effectiveness and robustness of AGP method compared to state-of-the-art 
graph fine-tuning methods.


\section{Related works}

\subsection{Graph Prompt Tuning}
Prompt learning aims to adapt large pre-trained models to downstream tasks without full parameter fine-tuning~\cite{GPF,GPPT}, which has recently garnered increasing attention in the graph domain. 
Existing graph prompt learning techniques 
can be generally partitioned into two distinct categories:
input-level and output-level prompting.
Input-level prompting aims to bridge the gap between pre-training and downstream tasks by aligning their data distributions. This is primarily achieved by injecting learnable components into the input space or intermediate GNN layers.
For instance, Graph Prompt Feature (GPF)~\cite{GPF} introduces shared prompt vectors that modulate input node features via a learnable weighting mechanism.
GraphControl~\cite{GraphControl} and EdgePrompt~\cite{EdgePrompt} optimize the input graph topology by incorporating edge-level prompts. 
All-in-One~\cite{Allinone} leverages informative subgraphs as prompts to adapt the model to specific tasks.  
Methods such as Generalized Graph Prompt (GGPT)~\cite{GGPT}, MultiGPrompt~\cite{MultiGPrompt}, and ProNoG~\cite{ProNoG} design task-specific prompt vectors that are combined with hidden layer features via element-wise addition or multiplication to steer the model toward target objectives.
Distinct from these input-level prompting, output-level prompting aligns pretext and downstream tasks by learning prototypes that establish a shared semantic space. Representative methods like GPPT~\cite{GPPT} and HetGPT~\cite{HetGPT} leverage this paradigm to adapt frozen pre-trained models to downstream tasks.
However, existing graph prompting frameworks remain inherently vulnerable to graph attacks and noise on either graph topology or node features, resulting in significant performance degradation, as illustrated in Fig.~\ref{fig:acc_degradation}. 
To overcome this limitation, we propose a robust adversarial graph prompt tuning framework.

\subsection{Adversarial Training on Graph}
Adversarial training constitutes a fundamental defense strategy for enhancing GNN robustness against malicious attacks. 
The key to adversarial training is formulating a min-max optimization objective, wherein model parameters are optimized to resist worst-case adversarial noise.
According to the different attack targets, we can divide existing adversarial GNNs into three categories: feature-based, topology-based, and hybrid adversarial robust GNNs.
Feature-based adversarial training focuses on learning a perturbation matrix applied to node features. By injecting noise into the node feature space, these methods train GNNs to maintain performance under feature-level attacks, such as Nettack~\cite{nettack}.
Topology-based adversarial training only perturbs the graph topology through operations such as edge addition, deletion or rewiring. The model is then trained on perturbed graph topology to reduce its sensitivity to structural noise. Notable examples include Metattack~\cite{mettack}, GraphPGD~\cite{GraphPGD} and Graph Structure Attacking~\cite{AGC}.
Hybrid adversarial training simultaneously attacks both graph topology and node features. This approach creates more comprehensive and challenging adversarial examples, leading to models that are robust to various types of attack noises, such as IG-FGSM~\cite{IG} and Dual-targeted adversarial robust GNN~\cite{kwon2025dual}.
Despite their successes, existing adversarial robust GNN methods are designed for specific GNN architectures trained from scratch. 
They can not be directly applicable to the prevalent `pre-training \& fine-tuning' paradigm. 

While recent work~\cite{backdoorGPT} has utilized adversarial training to propose a new backdoor attack method within graph prompt learning framework, 
the challenge of ensuring adversarial robustness under the graph prompt framework remains an open problem. 
To overcome this problem, we present a novel graph prompt-based adversarial training framework for robust GNNs' fine-tuning under multiple attack scenarios.

\section{Adversarial Graph Prompt Model}

\subsection{Preliminaries}
Let $G(\mathbf{X}, \mathbf{A})$ be a graph with node features $\mathbf{X} \in \mathbb{R}^{N \times D}$ and adjacency matrix $\mathbf{A} \in \{0, 1\}^{N \times N}$, where $N$ and $D$ are the number of nodes and feature dimension, respectively. 
In scenarios with scarce downstream labels, the `pre-training \& fine-tuning' paradigm is commonly employed to enhance the expressive capability of graph models. 
Under this paradigm, the GNN backbone is first pre-trained on a source dataset via unsupervised manner~\cite{graphcl,infograph}, and subsequently adapted to downstream tasks through fine-tuning.
Graph prompt learning is one of the effective fine-tuning methods~\cite {GPF,GGPT}, which can retain the pre-trained knowledge by optimizing 
only a small set of additional, parameterized graph prompts.
This paper focuses on the widely used input-level graph prompt method.
Let $\mathcal{G}(\cdot; \Theta^*, \mathcal{P})$ denote the prompt-augmented graph learning model,
which integrates a pre-trained backbone with fixed parameters $\Theta^*=\{\Theta^{(l)}\}_{l=0}^{L-1}$ and a set of learnable prompts $\mathcal{P}=\{\mathbf{P}^{(l)}\}_{l=0}^{L-1}$ to adapt the model to specific downstream tasks. Here, $\mathbf{P}^{(l)}$ represents the prompt component at the $l$-th layer, which can be instantiated in various forms, such as feature vectors, topological structures, or subgraphs.
Based on the above notations, the problem of input-level graph prompt learning can be formulated as, 
\begin{align}\label{eq:GPT_objective}
\min_{\mathcal{P}} \mathcal{L}_{task}\big(\mathcal{G}(\mathbf{X},\mathbf{A};\Theta^*,\mathcal{P}),\mathbf{Y}\big)
\end{align}
where $\mathcal{L}_{task}$ denotes the downstream task loss function and $\mathbf{Y}$ represents the labels.

\subsection{AGP model} 

In real-world scenarios, graph data on downstream tasks inevitably suffers from noise and malicious attacks. These issues, which manifest as topological anomalies or feature corruptions, arise from diverse sources such as environmental disturbances, human annotation errors, imperfect data preprocessing, and adversarial attacks~\cite{ASurvey}.
Pre-trained GNN models are typically sensitive to such graph noises or attacks, which may result in significant performance reduction on downstream tasks~\cite{song2025gpromptshield}.
To overcome this issue, we develop a simple yet effective robust graph prompt finetuning method, named \textbf{A}dversarial \textbf{G}raph \textbf{P}rompt (AGP),
which introduces adversarial training into graph prompt for robust fine-tuning. 
Specifically, AGP formulates the graph prompt learning process as a min-max optimization problem. In this framework, the adversarial noise and graph prompts are updated iteratively in a mutually reinforcing manner: the perturbations are optimized to maximize the downstream task loss by simulating worst-case attacks, while the graph prompts are simultaneously trained to minimize this loss against such perturbations. This adversarial interplay forces the prompt model to be robust w.r.t. noise.

In this paper, we mainly focus on the formulation of adversarial attacks on node features and graph topology as additive noise components. Specifically. let $\mathbf{E}_{x}\in\mathbb{R}^{N\times D}$ and $\mathbf{E}_{a}\in\mathbb{R}^{N\times N}$ denote the adversarial perturbation matrices targeting node features and graph topology, respectively. Building upon the general formulation of graph prompt learning (Eq.(\ref{eq:GPT_objective})), our AGP is generally formulated as:
\begin{align}\label{eq:AGP_objective}
\begin{small}
\min_{\small{\mathcal{P}}}\max_{\small{\mathbf{E}_{x}\in\mathcal{C}_x,\mathbf{E}_{a}\in\mathcal{C}_a}} \mathcal{L}_{task}\big(\mathcal{G}(\mathbf{X}+\mathbf{E}_{x},\mathbf{A}+\mathbf{E}_{a};\Theta^*,\mathcal{P}),\mathbf{Y}\big)
\end{small}
\end{align}
where $\Theta^*$ represents the frozen parameters of the pre-trained GNN model.  $\mathcal{C}_x$ and $\mathcal{C}_a$ define the feasible perturbation constraints for node features and graph topology, respectively. 
Note that, the above AGP is a general scheme. It can be integrated with various pre-trained GNN models and specific prompting strategies to achieve robust fine-tuning on the downstream tasks. 

\section{Optimization of AGP model}
The optimization of the AGP model (Eq.(\ref{eq:AGP_objective})) constitutes a bi-level optimization problem consisting of: (i) {inner maximization}, which synthesizes adversarial noise subject to specific constraints to maximize the task loss and (ii) {outer minimization}, which optimizes the graph prompts to fortify the pre-trained GNN against such attacks. We derive an alternating  algorithm to optimize it. 
The detailed derivations for each step are presented in the subsequent subsections. 

\subsection{Inner maximization}
The inner maximization phase focuses on synthesizing worst-case perturbations. 
Its primary objective is to identify adversarial modifications to both graph topology and node features that maximize the downstream task loss. 
Fixing graph prompts $\mathcal{P}^*=\{\mathbf{P}^{(l)^*}\}^{L-1}_{l=0}$, this inner maximization sub-problem can be formally expressed as:
\begin{align}\label{eq:AGP_objective_max}
\max_{\mathbf{E}_{x}\in\mathcal{C}_x,\mathbf{E}_{a}\in\mathcal{C}_a} \mathcal{L}_{task}\big(\mathcal{G}(\mathbf{X}+\mathbf{E}_{x},\mathbf{A}+\mathbf{E}_{a};\Theta^*,\mathcal{P}^*),\mathbf{Y}\big)
\end{align}
Below, we first present the detail noise constraints and then derive an algorithm to optimize it. 


\subsubsection{Constraints definition}

For feature-based attack, 
the noise matrix $\mathbf{E}_{x}\in\mathbb{R}^{N\times D}$ should be constrained to ensure its magnitude remains within a $\epsilon$-radius $q$-norm ball, thus guaranteeing the perturbation is imperceptible, as suggested in~\cite{PGD,PGD-trade}.
Thus, the constraints of node feature attack can be formulated as
\begin{align}\label{EQ:fpgd}
\mathbf{E}_{x}\in\mathcal{C}_{x}=\{\|\mathbf{E}_{x}\|_q \leq \epsilon\}
\end{align}

For the discrete topology-based attack,
we introduce a binary indicator matrix $\mathbf{B} \in \{0, 1\}^{N \times N}$ to denote edge flips, where $\mathbf{B}_{ij} = \mathbf{B}_{ji} = 1$ denotes that the connection between nodes $i$ and $j$ is modified (edge removing or adding). This formulation allows us to model topology attacks as additive noise $\mathbf{E}_a$~\cite{GraphPGD}. 
To ensure the adversarial graph topology to be sparse, the edge perturbation indicator matrix $\mathbf{B}$ is assumed to  satisfy the sparse constraint. 
Consequently, the topology-based attack can be formulated as:
\begin{align}\label{EQ:spgd}
   \mathbf{E}_{a} &= \mathbf{B} \odot (\mathbf{1}\mathbf{1}^\top - \mathbf{I} - 2\mathbf{A}) \\
   {s.t.} \ \ \mathbf{B}\in\mathcal{C}_{a}&=\{\mathbf{B}_{ij} \in \{0, 1\},
	\|\mathbf{B}\|_0 \leq r \|\mathbf{A}\|_0\} \nonumber
\end{align}
where $\odot$ denotes the element-wise product operation. $\mathbf{1}$ denotes all-ones vector and $\mathbf{I}$ is the identity matrix. 
$\|\cdot\|_0$ denotes the $\ell_0$-norm, which counts the number of non-zero entries in the matrix.

\subsubsection{Objective optimization}
Based on the above constraint definition,   we can rewrite the above inner maximization sub-problem (Eq.(\ref{eq:AGP_objective_max})) as follows, 
\begin{align}\label{eq:AGP_objective_max_detail}
&\mathbf{E}^*_x,\mathbf{E}^*_a = \mathop{\arg\max}\limits_{\mathbf{E}_{x}\in\mathcal{C}_x,\mathbf{B}\in\mathcal{C}_a}
\mathcal{L}_{task}(\mathcal{G}(\mathbf{X}+\mathbf{E}_{x},\mathbf{A}+\mathbf{E}_{a};\Theta^*,\mathcal{P}^*),\mathbf{Y}) \nonumber\\
&{s.t.} \  \mathcal{C}_x = \{\|\mathbf{E}_{x}\|_q\leq\epsilon\},
\mathcal{C}_{a}=\{\mathbf{B}_{ij} \in \{0, 1\},
\|\mathbf{B}\|_0 \leq r \|\mathbf{A}\|_0\}
\end{align}
There are many attack methods that can be utilized to address this problem.
In this paper, we extend the standard Projected Gradient Descent (PGD)~\cite{PGD} to a new Joint Projected Gradient Descent (JointPGD), which is designed to concurrently attack both node features and graph topology.
Specifically, JointPGD simultaneously updates adversarial noise matrices of graph topology and node features by moving K steps in the direction of the joint gradient of the loss function. 
For simplicity, we denote $\mathbb{A} =\mathbf{1}\mathbf{1}^\top - \mathbf{I} - 2\mathbf{A}$ and let 
$\mathcal{L}^{(k)}$ denote 
\begin{equation}
    \mathcal{L}^{(k)}=\mathcal{L}_{task}(\mathcal{G}(\mathbf{X}+\mathbf{E}^{(k-1)}_{x},\mathbf{A}+\mathbf{E}^{(k-1)}_a);\Theta^*,\mathcal{P}^*),\mathbf{Y})
\end{equation}
where $k = 1,2,\dots, K$ denotes the perturbation steps.
Then, the $k$-th step of our JointPGD attack is formulated as:
\begin{align}
	\mathbf{E}^{(k)}_{x} &= \mathbf{E}^{(k-1)}_{x}+\alpha \cdot {\rm sign}(\nabla_{\mathbf{X}^{(k)}_{adv}} \mathcal{L}^{(k)}) \label{eq:gradient1}\\
	\mathbf{E}^{(k)}_a&= (\mathbf{B}^{(k-1)}+\beta \cdot \nabla_{\mathbf{B}^{(k-1)}}\mathcal{L}^{(k)}) \odot \mathbb{A} \label{eq:gradient2}
\end{align}
where 
$\mathbf{X}^{(k)}_{adv} = \mathbf{X} + \mathbf{E}^{(k-1)}_x$ and 
$\alpha, \beta$ are learning rates. 
$\mathbf{E}^{(0)}_{x}$ is initialized as a random matrix following the uniform distribution $\mathcal{U}(-\epsilon, \epsilon)$ and $\mathbf{B}^{(0)}$ is initialized to a zero matrix. 
To ensure the constraint condition $\mathcal{C}_x$, we apply the following projection function after each iteration:
\begin{align}
	{\rm Proj}_x(\mathbf{E}_{x},\epsilon) = min(-\epsilon, max(\mathbf{E}_{x}, \epsilon))
\end{align}
The constraint $\mathcal{C}_a$ involves both $\ell_0$-norm constraint $\|\mathbf{B}\|_0 \leq r \|\mathbf{A}\|_0$ and discrete binary constraint $\mathbf{B}_{ij}\in\{0,1\}$. 
To ensure the $\ell_0$-norm constraint, 
we adopt a threshold function  after each iteration as
\begin{equation}
 {\rm Proj}_a(\mathbf{B}, r)_{ij}=\left\{
\begin{aligned}
	&0, & \ \  \mathbf{B}_{ij}< \eta \\
	&\mathbf{B}_{ij}, & \ \  \mathbf{B}_{ij} \geq \eta
\end{aligned}
\right.
\end{equation}
where $\eta$ denotes the $q$-th largest value in $\mathbf{B}$ and 
$q = \lfloor r\|\mathbf{A}\|_0\rfloor$. 
To ensure the binary constraint, 
we perform Bernoulli sampling~\cite{7446349} on $\mathbf{B}^{(K)}$ from the last iteration to obtain the final discrete perturbation indicator matrix $\mathbf{B}^*$. 
The complete attack procedure JointPGD is summarized in Algorithm \ref{A:hyper attack}.

\begin{algorithm}[h]
\caption{JointPGD for hybrid attack }\label{A:fpdg}
\label{A:hyper attack}
\begin{algorithmic}[1]
\STATE \textbf{Input:} Graph data $G(\mathbf{X},\mathbf{A})$, graph labels $\mathbf{Y}$, learning rate $\alpha$ and $\beta$, perturbation radius $\epsilon$, perturbation ratio $r$, perturbation step $K$, prompt matrix $\mathcal{P}^*$, pre-trained GNN model with parameter $\Theta^*$\\
\STATE \textbf{Initialization:} Initialize noise and indicator matrix:\\
\quad $\mathbf{E}_{x}^{(0)} \sim \mathcal{U}(-\epsilon, \epsilon)$, 
$\mathbf{B}^{(0)} \gets \mathbf{0}$ \\
\FOR{$k = 1$ \TO $K$}
\STATE Perform gradient descent: \\
\quad $\mathbf{E}^{(k)}_{x} \gets\mathbf{E}^{(k-1)}_{x}+\alpha \cdot {\rm sign}(\nabla_{\mathbf{X}_{{adv}}^{(k)}} \mathcal{L}^{(k)})$\\
\quad  $\mathbf{B}^{(k)}\gets  \mathbf{B}^{(k-1)}+\beta \cdot\nabla_{\mathbf{B}^{(k-1)}}\mathcal{L}^{(k)}$ \\
\STATE Update noises via projection function: \\
\quad $\mathbf{E}^{(k)}_{x} \gets {\rm Proj}_x(\mathbf{E}^{(k)}_{x},\epsilon)$,
\  $\mathbf{B}^{(k)}\gets {\rm Proj}_a( \mathbf{B}^{(k)},r)$ \\
\ENDFOR
\STATE Generate final adversarial graph noises: \\ 
\quad $\mathbf{E}^*_{x}\gets\mathbf{E}_{x}^{(K)}$,
$\mathbf{E}^*_{a}\gets \operatorname{Bernoulli}(\mathbf{B}^{(K)}) \odot \mathbb{A}$
\STATE  \textbf{Return} Adversarial graph noises $\mathbf{E}^*_{x}$ and $\mathbf{E}^*_{a}$
\end{algorithmic}
\end{algorithm}

\begin{figure*}[!htpb]
\centering
\includegraphics[width=1.0\textwidth]{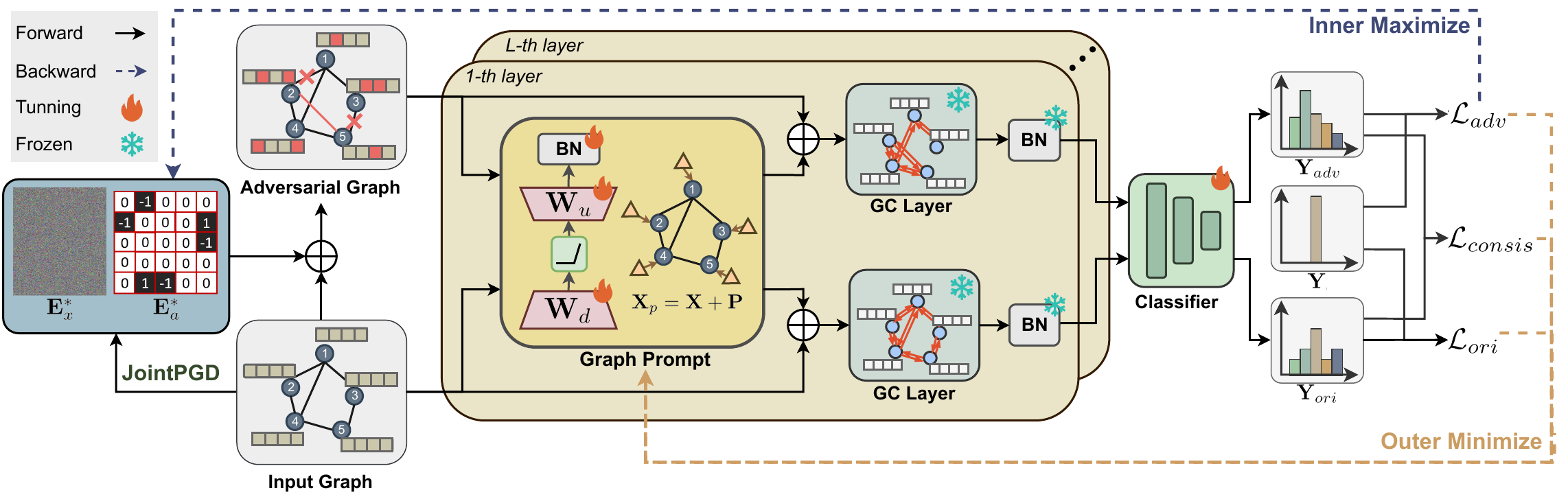}
 \caption{Overall architecture of the proposed Adversarial Graph Prompt (AGP) framework.
  The finetuning process involves two objectives: (1) Maximization (blue dashed line): maximizing the adversarial loss $\mathcal{L}_{adv}$ by generating more challenging adversarial noises $\mathbf{E}^*_x$ and $\mathbf{E}^*_a$ via JointPGD algorithm. (2) Minimization (red dashed line): minimizing the adversarial loss $\mathcal{L}_{adv}$, original loss $\mathcal{L}_{ori}$ and consistency loss $\mathcal{L}_{consis}$ to tune the prompt module and classifier while keeping the GNN backbone frozen.
 }
\label{fig:architecture}
\end{figure*}
\subsection{Outer minimization}
The outer minimization problem aims to learn robust graph prompts that enable the model to maintain performance under adversarial attacks, thereby minimizing the worst-case task loss.
Given the optimized adversarial graph noise $\mathbf{E}^*_{x}$ and $\mathbf{E}^*_{a}$, the outer minimization sub-problem can be formulated as:
\begin{align}\label{eq:AGP_objective_min}
\min_{\mathcal{P}} \mathcal{L}_{task}\big(\mathcal{G}(\mathbf{X}+\mathbf{E}^*_{x},\mathbf{A}+\mathbf{E}^*_{a};\Theta^*,\mathcal{P}),\mathbf{Y}\big)
\end{align}
Note that, Eq.(\ref{eq:AGP_objective_min}) provides a general prompt learning formulation and various specific graph prompt strategies can be adopted here. 
In this paper, we adopt the graph node prompt strategy in  which each node is augmented by adding a prompt vector to its feature in each pre-trained GNN layer, i.e.,  
\begin{align}\label{eq:prompt_add}
\mathbf{X}^{(l)}_p = \mathbf{X}^{(l)} + \mathbf{P}^{(l)}
\end{align}
$\mathbf{P}^{(l)}$ denotes the learnable graph node prompts for the $l$-th layer where $l=0, 1,\cdots, L-1$ and $\mathbf{X}^{(0)}=\mathbf{X}$. 
Note that, this strategy can generally unify diverse graph prompting paradigms~\cite{GPF}. 
In particular, we will provide a theoretical analysis in $\S$ V to demonstrate its capability to mitigate both topological perturbation and node feature noise. 
Here, the prompt $\mathbf{P}^{(l)}$ in Eq.(\ref{eq:prompt_add}) can be learned either directly~\cite{GPF,GGPT} or using parameterized function~\cite{li2024instanceawaregraphpromptlearning}. 
In this paper, we introduce a more compact and lightweight bottleneck function to compute it as
\begin{align}\label{eq:prompt_define}
\mathbf{P}^{(l)} = \text{BatchNorm}\left( \sigma\big( \mathbf{X}^{(l)} \mathbf{W}^{(l)}_{d} \big) \mathbf{W}^{(l)}_{u} \right)
\end{align}
where $\mathbf{W}^{(l)}_{d} \in \mathbb{R}^{D^{(l)} \times d}$, $\mathbf{W}^{(l)}_{u} \in \mathbb{R}^{d \times D^{(l)}}$ and $d \ll D^{(l)}$. 
$\sigma(\cdot)$ is a nonlinear activation function and $\text{BatchNorm}(\cdot)$ denotes batch normalization operation. 
Fig.~\ref{fig:architecture} demonstrates the architecture of the proposed graph prompt learning module. 
%
Based on Eq.(\ref{eq:prompt_define}), the outer minimization in Eq.(\ref{eq:AGP_objective_min}) becomes to learn the optimal parameters $\mathcal{W}=\{\mathbf{W}^{(l)}_{d},\mathbf{W}^{(l)}_{u} \}^{L-1}_{l=0}$ that preserve model performance under adversarial noise $\mathbf{E}^*_{x}$ and $\mathbf{E}^*_{a}$, i.e., 
\begin{small}
\begin{align}
\label{eq:AGP_objective_min_detail}
\mathcal{W}^* = \arg\min_{\mathcal{W}} \mathcal{L}_{task}\bigl( \mathcal{G}\bigl( \mathbf{X}+\mathbf{E}^*_{x}, \mathbf{A}+\mathbf{E}^*_{a};\Theta^*, \mathcal{W}\bigr), \mathbf{Y} \bigr) 
\end{align}
\end{small}
This can be solved via gradient descent algorithm as, 
\begin{align}
\label{eq:AGP_objective_min_detail}
\mathcal{W}' = \mathcal{W} - \lambda\nabla_{\mathcal{W}} \mathcal{L}_{total}
\end{align}
where 
$\lambda$ denotes the learning rate.
The overall loss function $\mathcal{L}_{total}$ is formulated as
\begin{align}
\label{eq:loss_total}
\mathcal{L}_{total} =  \mathcal{L}_{adv} + \gamma \mathcal{L}_{ori} +\eta \mathcal{L}_{consis}
\end{align}
where $\gamma$ and $\eta$ are hyper-parameters.
$\mathcal{L}_{adv}$ denotes the adversarial loss defined as:
\begin{align}\label{eq:loss_adv}
\mathcal{L}_{adv} = \mathcal{L}_{task}(\mathbf{Y}_{adv}, \mathbf{Y})
\end{align}
where $\mathbf{Y}_{adv}$ denotes the model prediction based on adversarial inputs.
To preserve the model's standard inference capability during robust learning,
the model is jointly optimized with the task loss on the original clean data as
\begin{align}
\label{eq:loss_ori}
\mathcal{L}_{ori} = \mathcal{L}_{task}(\mathbf{Y}_{ori}, \mathbf{Y})
\end{align}
where $\mathbf{Y}_{ori}$ denotes the prediction from the original graph. 
In addition, as suggested in work~\cite{PGD-trade}, we also introduce a consistency loss $\mathcal{L}_{consis}$ to enforce consistency between the predictions of original and adversarial inputs, i.e., 
\begin{align}
\label{eq:loss_consis}
\mathcal{L}_{consis} = \mathcal{L}_{task}(\mathbf{Y}_{adv}, \mathbf{Y}_{ori}). 
\end{align}
The complete training procedure is shown in Algorithm~\ref{A:rgp} and
the overall framework of AGP is demonstrated in Fig.~\ref{fig:architecture}.
 

\begin{algorithm}[h]
\caption{Adversarial Graph Prompting}\label{A:rgp}
\begin{algorithmic}[1]
\STATE\textbf{Input:} Graph data $G(\mathbf{X}, \mathbf{A})$, graph label $\mathbf{Y}$, pre-trained GNN parameters $\Theta^*$, learning rate $\gamma$,
initial prompt parameters $\mathcal{W}^{(0)}$, the number of iterations $T$, hyperparameters $\gamma$, $\eta$
\FOR{$t = 1$ \TO $T$}
    \STATE Compute predictions on clean graph: \\
        \quad$\mathbf{Y}_{ori} \gets \mathcal{G}(\mathbf{X}, \mathbf{A};\Theta^*,\mathcal{W}^{(t-1)})$
    \STATE Generate adversarial noises via inner maximization: \\
        \quad$\mathbf{E}^*_{x},\mathbf{E}^*_{a} \gets \text{JointPGD}(G(\mathbf{X}, \mathbf{A}), \mathcal{G}(\cdot;\Theta^*,\mathcal{W}^{(t-1)}))$
    \STATE Construct adversarial samples:\\
   \quad $\mathbf{X}^*_{adv}\gets\mathbf{X}+\mathbf{E}^*_{x}$, $ \mathbf{A}^*_{adv}\gets\mathbf{A}+\mathbf{E}^*_{a}$
    \STATE Compute predictions on adversarial graph:\\
        \quad$\mathbf{Y}_{adv} \gets \mathcal{G}(\mathbf{X}^*_{adv}, \mathbf{A}^*_{adv};\Theta^*,\mathcal{W}^{(t-1)}))$
    \STATE Compute total loss: \\
        \quad$\mathcal{L}_{total} \gets \mathcal{L}_{adv} + \gamma\mathcal{L}_{consis} + \eta\mathcal{L}_{ori}$
    \STATE Update prompt parameters via gradient descent:\\
        \quad$\mathcal{W}^{(t)} \gets \mathcal{W}^{(t-1)} - \lambda\nabla_{\mathcal{W}^{(t-1)}} \mathcal{L}_{total}$
\ENDFOR
\STATE \textbf{Return:} Optimized prompt parameters $\mathcal{W}^{(T)}$
\end{algorithmic}
\end{algorithm}

\section{Theoretical analysis}

\begin{figure*}[!htpb]
\centering
\includegraphics[width=0.85\textwidth]{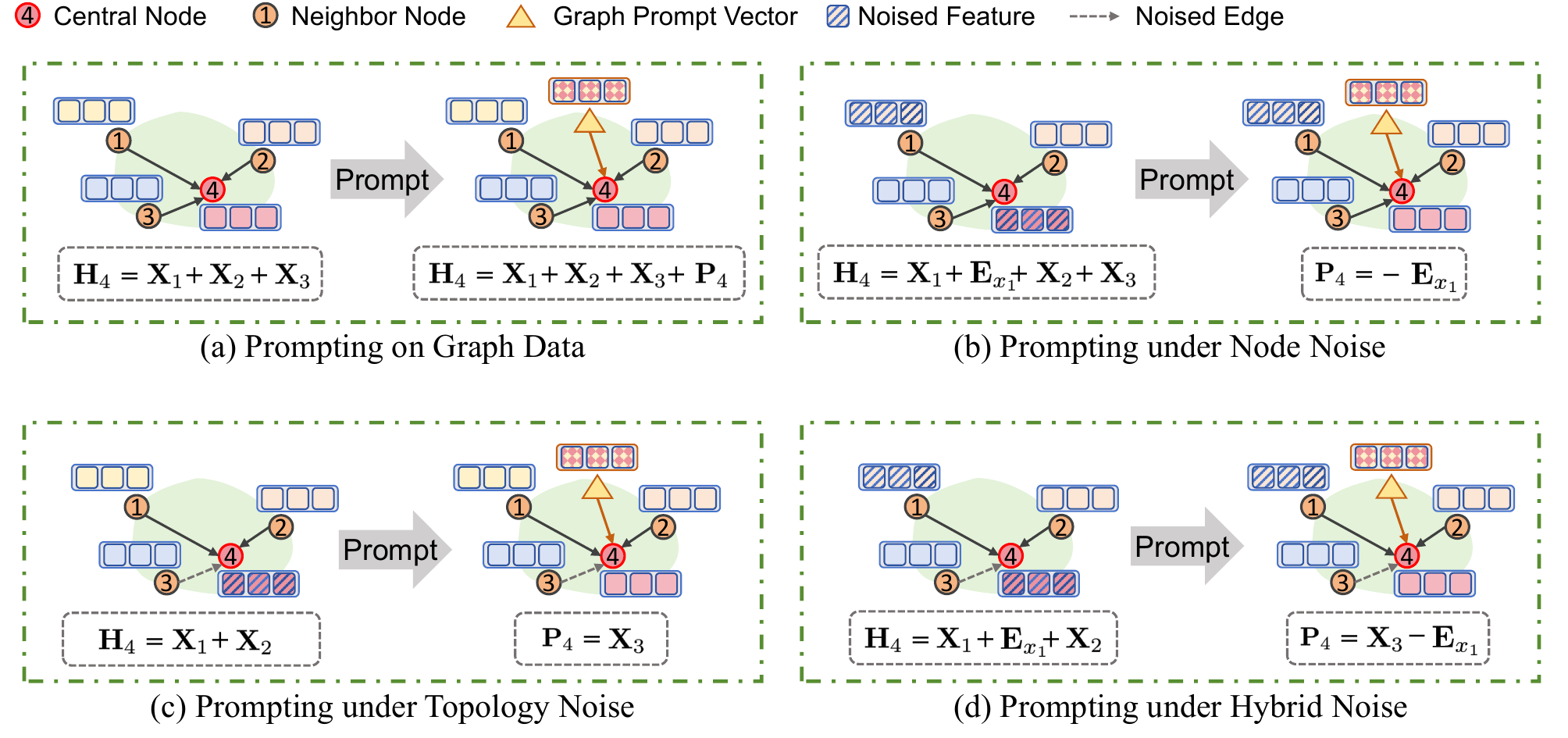}
  \caption{Illustrative examples demonstrating the capability of the proposed graph prompt in mitigating node and topology noise during neighborhood aggregation.}
\label{fig:demo}
\end{figure*}

In this section, we present a theoretical analysis and rigorously demonstrate how the proposed AGP can address both node and topology noise. 
Specifically, we have

\begin{theorem}\label{t3}
Suppose the input graph data $G(\hat{\mathbf{X}}, \hat{\mathbf{A}} )$ 
is corrupted by both node feature and topology noise $\{\mathbf{E}_x, \mathbf{E}_a\}$. For pre-training GNN model $\mathcal{G}(\cdot)$ with pre-trained parameters $\Theta^*=\{\Theta^{(l)}\}^{L-1}_{l=0}$, there exists optimal node prompts $\mathcal{P}=\{\mathbf{P}^{(l)}\}^{L-1}_{l=0}$ such that: 
\begin{equation}\label{EQ:EXSP}
\mathcal{G}(\hat{\mathbf{A}}, \hat{\mathbf{X}};\Theta^*,\mathcal{P}) = \mathcal{G}(\hat{\mathbf{A}} - \mathbf{E}_a, \hat{\mathbf{X}} - \mathbf{E}_x;\Theta^*)
\end{equation}
\end{theorem}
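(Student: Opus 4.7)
My plan is to proceed by induction on the layer index $l$ using the standard message-passing form of a GNN layer, and to show that at each layer the prompt $\mathbf{P}^{(l)}$ can be chosen so that the pre-activation computed by the prompted model on the noisy graph exactly reproduces the pre-activation computed by the unprompted model on the clean graph. Since the layer-wise nonlinearity $\sigma(\cdot)$ is applied element-wise, matching pre-activations implies matching post-activations, so propagating this identity through the $L$ layers yields the desired Eq.(\ref{EQ:EXSP}).

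Concretely, I would instantiate $\mathcal{G}$ by the canonical aggregation-then-transform rule and compare the two trajectories
\begin{align*}
\mathbf{X}^{(l+1)}_{\text{noisy}} &= \sigma\!\bigl(\hat{\mathbf{A}}\,(\mathbf{X}^{(l)}_{\text{noisy}}+\mathbf{P}^{(l)})\,\Theta^{(l)}\bigr), \\
\mathbf{X}^{(l+1)}_{\text{clean}} &= \sigma\!\bigl((\hat{\mathbf{A}}-\mathbf{E}_a)\,\mathbf{X}^{(l)}_{\text{clean}}\,\Theta^{(l)}\bigr),
\end{align*}
with $\mathbf{X}^{(0)}_{\text{noisy}}=\hat{\mathbf{X}}$ and $\mathbf{X}^{(0)}_{\text{clean}}=\hat{\mathbf{X}}-\mathbf{E}_x$. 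Under the inductive hypothesis $\mathbf{X}^{(l)}_{\text{noisy}}=\mathbf{X}^{(l)}_{\text{clean}}$, equating the two pre-activations at layer $l+1$ and cancelling $\Theta^{(l)}$ reduces the required identity to the linear matrix equation
\begin{equation*}
\hat{\mathbf{A}}\,\mathbf{P}^{(l)} \;=\; -\mathbf{E}_a\,\mathbf{X}^{(l)}_{\text{clean}},
\end{equation*}
while the base case $l=0$ gives the analogous but richer equation $\hat{\mathbf{A}}\mathbf{P}^{(0)}=-\hat{\mathbf{A}}\mathbf{E}_x-\mathbf{E}_a(\hat{\mathbf{X}}-\mathbf{E}_x)$ that simultaneously absorbs the feature noise term. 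The illustrative aggregation picture in Fig.~\ref{fig:demo} is exactly the pointwise realization of this algebraic identity: the prompt re-weights each node's contribution to its neighbourhood so that the sum collected by $\hat{\mathbf{A}}$ over the noisy edges equals the sum that would have been collected by $\hat{\mathbf{A}}-\mathbf{E}_a$ over the clean edges with clean features.

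The remaining step is to exhibit a solution $\mathbf{P}^{(l)}$ of this linear system; this is where the ``existence of optimal prompts'' content of the theorem actually sits. Under the mild assumption that $\hat{\mathbf{A}}$ has full row rank (which holds generically for the normalized adjacency used in typical pre-trained GNN backbones, and more generally whenever the right-hand side lies in its column space), a closed-form solution $\mathbf{P}^{(l)}=\hat{\mathbf{A}}^{+}(-\mathbf{E}_a\,\mathbf{X}^{(l)}_{\text{clean}})$ exists via the Moore--Penrose pseudoinverse, and one can likewise write an explicit $\mathbf{P}^{(0)}$ absorbing both $\mathbf{E}_x$ and $\mathbf{E}_a$. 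Plugging these choices back propagates the equality layer by layer and completes the induction.

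The main obstacle I expect is precisely the solvability question: the topology noise enters the pre-activation through the product $\mathbf{E}_a\mathbf{X}^{(l)}_{\text{clean}}$, and there is no a priori guarantee that this matrix lies in the column space of $\hat{\mathbf{A}}$. I would therefore state the needed rank/range condition explicitly (or, equivalently, view the prompt as acting in the augmented feature space so that $\hat{\mathbf{A}}$ is effectively invertible on the relevant subspace), and note that the non-linear $\sigma$ poses no extra difficulty because equality is enforced at the pre-activation level.
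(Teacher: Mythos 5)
Your proposal is correct and follows essentially the same route as the paper: instantiate the layer rule, match the two propagations layer by layer (feature noise absorbed entirely at $l=0$, only topology noise thereafter), reduce each layer to a linear matrix equation in $\mathbf{P}^{(l)}$, and solve it in closed form — the paper obtains $\mathbf{P}^{(0)} = \widetilde{\mathbf{A}}^{-1}\mathbf{E}_a(\mathbf{E}_x-\hat{\mathbf{X}}^{(0)})-\mathbf{E}_x$ and $\mathbf{P}^{(l)}=-\widetilde{\mathbf{A}}^{-1}\mathbf{E}_a\hat{\mathbf{X}}^{(l)}$ for $l\ge 1$, exactly your linear system. The one place you diverge — flagging the solvability of $\hat{\mathbf{A}}\,\mathbf{P}^{(l)}=-\mathbf{E}_a\mathbf{X}^{(l)}$ as the main obstacle and resorting to a pseudoinverse with a range condition — is sidestepped in the paper by working with the GIN aggregation matrix $\widetilde{\mathbf{A}}=\hat{\mathbf{A}}+(1+\varepsilon)\mathbf{I}$, which is invertible for a generic choice of the learnable $\varepsilon$, so your rank assumption can be dropped once the self-loop term is included.
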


\begin{proof}
In the following proof, we take pre-trained GIN~\cite{GIN} as an example. Some other pre-trained model, such as GCN~\cite{GCN}, GraphSAGE~\cite{graphsage}, can be similarly obtained.  
For simplicity, let $\widetilde{\mathbf{A}} = \hat{\mathbf{A}} + (1+\varepsilon)\mathbf{I}$
where $\mathbf{I}$ denotes the identity matrix and $\varepsilon$ is a learnable scalar parameter.
Given the initial input 
$\hat{\mathbf{X}}^{(0)}=\hat{\mathbf{X}}$, the layer-wise propagation of GIN~\cite{GIN} with the node prompts $\mathbf{P}^{(l)}$
can be expressed as
\begin{align}\label{EQ:EXPL0}
&\mathcal{G}^{(l)}(\hat{\mathbf{A}},\hat{\mathbf{X}}^{(l)};\Theta^{(l)},\mathbf{P}^{(l)})
= \widetilde{\mathbf{A}}(\hat{\mathbf{X}}^{(l)} + \mathbf{P}^{(l)}) \Theta^{(l)}  \\
&= \widetilde{\mathbf{A}} \hat{\mathbf{X}}^{(l)} \Theta^{(l)}
   + \widetilde{\mathbf{A}} \mathbf{P}^{(l)} \Theta^{(l)} = \mathbf{X}^{(l+1)} + \widetilde{\mathbf{A}} \mathbf{P}^{(l)} \Theta^{(l)}\nonumber
\end{align}


First, let's begin with the input layer ($l=0$). 
To ensure Eq.(\ref{EQ:EXSP}) holds, the optimal input-layer prompt $\mathbf{P}^{(0)}$ 
is required to satisfy the following equation:
\begin{equation}\label{eq:gin_input_prompt}
\mathcal{G}^{(0)}(\hat{\mathbf{A}}, \hat{\mathbf{X}}^{(0)};\Theta^{(0)},\mathbf{P}^{(0)})
= \mathcal{G}^{(0)}(\hat{\mathbf{A}}-\mathbf{E}_a,\hat{\mathbf{X}}^{(0)}-\mathbf{E}_x)
\end{equation}
According to Eq.(\ref{EQ:EXPL0}), the left term of Eq.(\ref{eq:gin_input_prompt}) is
\begin{equation}\label{eq:gin_input}
\mathcal{G}^{(0)}(\hat{\mathbf{A}}, \hat{\mathbf{X}}^{(0)};\Theta^{(0)},\mathbf{P}^{(0)})
= \mathbf{X}^{(1)} +\widetilde{\mathbf{A}} \mathbf{P}^{(0)} \Theta^{(0)}
\end{equation}
The right term of  Eq.(\ref{eq:gin_input_prompt}) can be expanded as
\begin{align}\label{EQ:EXSPR}
	&\mathcal{G}^{(0)}(\hat{\mathbf{A}}-\mathbf{E}_{a},\hat{\mathbf{X}}^{(0)}-\mathbf{E}_{x};\Theta^{(0)})
	 =(\widetilde{\mathbf{A}}-\mathbf{E}_a) (\hat{\mathbf{X}}^{(0)}-\mathbf{E}_{x})  \Theta^{(0)} \nonumber\\
	& =\mathbf{X}^{(1)}-\mathbf{E}_a  \hat{\mathbf{X}}^{(0)}  \Theta^{(0)}-\widetilde{\mathbf{A}}  \mathbf{E}_{x}  \Theta^{(0)}+\mathbf{E}_a  \mathbf{E}_{x}  \Theta^{(0)}
\end{align}
By aligning Eq.(\ref{eq:gin_input}) and Eq.(\ref{EQ:EXSPR}), we can observe that Eq.(\ref{eq:gin_input_prompt}) holds when 
\begin{equation}
    \widetilde{\mathbf{A}} \mathbf{P} \Theta^{(0)} = \mathbf{E}_a \mathbf{E}_{x} \Theta^{(0)} - \mathbf{E}_a \hat{\mathbf{X}}^{(0)} \Theta^{(0)} - \widetilde{\mathbf{A}} \mathbf{E}_{x} \Theta^{(0)}
\end{equation}
Thus, we can obtain the optimal  $\mathbf{P}^{(0)}$ as 
\begin{equation}\label{EQ:EXSPP0}
    \mathbf{P}^{(0)} = \widetilde{\mathbf{A}}^{-1} \mathbf{E}_a (\mathbf{E}_{x} - \hat{\mathbf{X}}^{(0)}) - \mathbf{E}_{x}
\end{equation}

Second, 
based on the above analysis, we can show that the node feature noise $\mathbf{E}_x$ is absorbed via the input prompt $\mathbf{P}^{(0)}$. Thus, for the following middle hidden layers, we only need to consider the topology noise $\mathbf{E}_a$ because the (corrupted) topology $\mathbf{\hat{A}}$ participates in all layers of GIN model, i.e., Eq.(\ref{EQ:EXSP}) becomes:
\begin{equation}\label{eq:EXSPLR}
\mathcal{G}^{(l)}(\hat{\mathbf{A}}, \hat{\mathbf{X}}^{(l)} ;\Theta^{(l)},\mathbf{P}^{(l)})
=
\mathcal{G}^{(l)}(\hat{\mathbf{A}} - \mathbf{E}_a, \hat{\mathbf{X}}^{(l)})
\end{equation}
where $l=1,\dots, L-1$. Following the similar derivation, Eq.(\ref{eq:EXSPLR}) holds when
\begin{equation}\label{EQ:EXSPPl}
    \mathbf{P}^{(l)} = -\widetilde{\mathbf{A}}^{-1} \mathbf{E}_a \hat{\mathbf{X}}^{(l)},\  l=1,\cdots, L-1
\end{equation}

Finally, based on 
the above Eq.(\ref{EQ:EXSPP0}) and Eq.(\ref{EQ:EXSPPl}), 
we can find an optimal node prompt set $\mathcal{P}$ to compensate for both topology and node noise in pre-trained GIN model. 
\end{proof}

\textbf{Remark.} 
As analyzed  in the above proof, 
when both node noise $\mathbf{E}_x$ and topology noise  $\mathbf{E}_a$ exist, we need to conduct prompting on all pre-trained GNN layers, as shown in Eq.(\ref{eq:prompt_add}).  
However, when only node noise $\mathbf{E}_x$ 
 exists, 
its effect can be entirely absorbed by setting $\mathbf{P}^{(0)} = -\mathbf{E}_x$ and $\mathbf{P}^{(l)} = \mathbf{0}$, $l=1,\dots, L-1$. 
That is, we can 
conduct a prompt only on the input layer. 
In the experiments, we denote this simple variant as AGP-S.

\textbf{Demonstration.} 
To provide an intuitive understanding of how the proposed prompting mechanism mitigates both topology and node noise, we illustrate the neighborhood aggregation process for a representative target node, denoted as Node 4 in Fig.~\ref{fig:demo}. 
Fig.~\ref{fig:demo} (a) shows the standard aggregation process enhanced by a prompt. 
Fig.~\ref{fig:demo} (b)-(d) demonstrate specific robustness scenarios:
(b) \emph{Node noise}: When the features of neighbor Node 1 are corrupted by a noise vector $\mathbf{E}_{x_1}$, the prompt $\mathbf{P}_4$ neutralizes this perturbation by setting $\mathbf{P}_4=-\mathbf{E}_{x_1}$; 
(c) \emph{Topology noise}: 
When the edge between Node 4 and Node 3 is cut,
the prompt $\mathbf{P}_4$ effectively restores the missing connection by setting $\mathbf{P}_4 = \mathbf{X}_3$; 
(d) \emph{Hybrid noise}: Under a hybrid attack involving both node and topology noise, the prompt $\mathbf{P}_4 = \mathbf{X}_3 -\mathbf{E}_{x_1}$ simultaneously counteracts both kinds of noise.
The above cases intuitively demonstrate the theoretical findings in Theorem~\ref{t3}, showing that an optimal prompt can effectively preserve representation integrity against diverse noise.

\section{Experiments}
To comprehensively evaluate the robustness and effectiveness of the proposed AGP, we benchmark it against five representative fine-tuning methods across seven molecular datasets under three distinct graph attack scenarios.
\subsection{Experimental Setup}
\subsubsection{Datasets}
The backbone model is pre-trained on 2 million unlabeled molecular graphs which are sampled from the ZINC15 database~\cite{ZINC}
For downstream evaluation, we adopt seven widely used molecular property prediction datasets from the chemistry domain~\cite{Hu2020Strategies}. 
These datasets cover a diverse set of biochemical tasks, including drug toxicity prediction, bioactivity classification, and side-effect identification. 
In all datasets, each sample is represented as a molecular graph, where nodes correspond to atoms and edges denote chemical bonds extracted from the molecular structure. 
Detailed dataset statistics are summarized in Table~\ref{tab:dataset}.
All datasets use the standard scaffold-based data splits commonly employed in molecular graph learning benchmarks~\cite{graphcl}.
\begin{table}[!htbp]
\centering
\small
\renewcommand{\arraystretch}{1.1}
\caption{Statistics of datasets for downstream tasks.}
\begin{tabular}{lcccc}
\toprule
\textbf{Dataset} & \#Graphs & \#Avg.Node & \#Avg.Edge & \#Tasks \\
\midrule
BACE    & 1513  & 34.09 & 73.72 & 1   \\
BBBP    & 2039  & 24.06 & 51.91 & 1   \\
ClinTox & 1478  & 26.16 & 55.77 & 2   \\
HIV     & 41127 & 25.51 & 54.94 & 1   \\
Sider   & 1427  & 33.64 & 70.72 & 27  \\
Tox21   & 7831  & 18.57 & 38.59 & 12  \\
ToxCast & 8575  & 18.78 & 38.52 & 617 \\
\bottomrule
\end{tabular}
\label{tab:dataset}
\end{table}

\begin{table*}[htbp]
  \centering
  \caption{Comparison of robustness performance across seven datasets under node feature attack. Bold and underlined indicate the best and second-best methods, respectively. `Avg.' represents the average across all datasets.}
  \label{tab:result_node}
  \small
  \renewcommand{\arraystretch}{1.2}
  \begin{tabular}{l|ccccccc|c}
    \toprule
    Methods & BACE & BBBP & TOX21 & TOXCAST & SIDER & HIV & CLINTOX & \textbf{Avg.} \\
    \midrule
    FT (100\%) & 61.66{\scriptsize$\pm$1.75} & 58.99{\scriptsize$\pm$1.20} & 68.36{\scriptsize$\pm$0.35} & 60.85{\scriptsize$\pm$0.15} & 56.49{\scriptsize$\pm$0.95} & 66.88{\scriptsize$\pm$1.50} & 65.47{\scriptsize$\pm$2.28} & 62.67{\scriptsize$\pm$1.17} \\
    BitFit (0.2\%) & 47.14{\scriptsize$\pm$2.96} & 53.67{\scriptsize$\pm$1.18} & 64.08{\scriptsize$\pm$0.70} & 57.74{\scriptsize$\pm$0.73} & 53.95{\scriptsize$\pm$0.99} & 59.42{\scriptsize$\pm$2.31} & 57.76{\scriptsize$\pm$1.67} & 56.25{\scriptsize$\pm$1.51} \\
    GPF (0.01\%) & 48.53{\scriptsize$\pm$2.11} & 53.29{\scriptsize$\pm$1.40} & 64.24{\scriptsize$\pm$1.14} & 57.80{\scriptsize$\pm$0.36} & 54.41{\scriptsize$\pm$0.42} & 59.64{\scriptsize$\pm$2.02} & 57.55{\scriptsize$\pm$4.85} & 56.49{\scriptsize$\pm$1.76} \\
    LoRA (5.0\%) & 55.85{\scriptsize$\pm$1.92} & 55.82{\scriptsize$\pm$0.60} & 67.04{\scriptsize$\pm$1.04} & 59.43{\scriptsize$\pm$0.50} & 54.79{\scriptsize$\pm$1.25} & 59.81{\scriptsize$\pm$0.99} & 62.69{\scriptsize$\pm$3.68} & 59.35{\scriptsize$\pm$1.43} \\
    AdapterGNN (5.7\%) & 62.40{\scriptsize$\pm$1.38} & \underline{59.58{\scriptsize$\pm$1.06}} & 68.40{\scriptsize$\pm$0.44} & 60.20{\scriptsize$\pm$0.37} & 55.26{\scriptsize$\pm$0.55} & 65.34{\scriptsize$\pm$0.76} & \underline{66.10{\scriptsize$\pm$2.58}} & 62.47{\scriptsize$\pm$1.02} \\
    \midrule
    AGP-S (0.5\%) & \underline{70.75{\scriptsize$\pm$2.42}} & 59.40{\scriptsize$\pm$1.06} & \underline{73.83{\scriptsize$\pm$0.24}} & \underline{62.94{\scriptsize$\pm$0.59}} & \underline{60.17{\scriptsize$\pm$0.49}} & \underline{69.26{\scriptsize$\pm$0.97}} & 65.17{\scriptsize$\pm$1.48} & \underline{65.93{\scriptsize$\pm$1.04}} \\
    AGP (2.6\%) & \textbf{77.41{\scriptsize$\pm$2.24}} & \textbf{65.00{\scriptsize$\pm$0.88}} & \textbf{75.46{\scriptsize$\pm$0.46}} & \textbf{63.92{\scriptsize$\pm$0.44}} & \textbf{60.70{\scriptsize$\pm$0.62}} & \textbf{75.60{\scriptsize$\pm$0.49}} & \textbf{69.12{\scriptsize$\pm$2.56}} & \textbf{69.60{\scriptsize$\pm$1.10}} \\
    \bottomrule
  \end{tabular}
\end{table*}

\begin{table*}[htbp]
  \centering
  \caption{Comparison of robustness performance across seven datasets under topology attacks. Bold indicates the best method. `Avg.' represents the average across all datasets.}
  \label{tab:result_topo}
  \small
  \renewcommand{\arraystretch}{1.2}
  \begin{tabular}{l|ccccccc|c}
    \toprule
    Methods & BACE & BBBP & TOX21 & TOXCAST & SIDER & HIV & CLINTOX & \textbf{Avg.} \\
    \midrule
    FT (100\%) & 74.06{\scriptsize$\pm$1.25} & 60.50{\scriptsize$\pm$2.53} & 70.33{\scriptsize$\pm$0.63} & 60.63{\scriptsize$\pm$0.21} & 59.66{\scriptsize$\pm$1.49} & 67.74{\scriptsize$\pm$0.68} & 57.70{\scriptsize$\pm$1.94} & 64.37{\scriptsize$\pm$1.25} \\
    BitFit (0.2\%) & 73.82{\scriptsize$\pm$1.57} & 59.95{\scriptsize$\pm$1.58} & 71.16{\scriptsize$\pm$0.32} & 60.86{\scriptsize$\pm$0.53} & 62.49{\scriptsize$\pm$0.85} & 65.02{\scriptsize$\pm$1.50} & 56.32{\scriptsize$\pm$1.73} & 64.23{\scriptsize$\pm$1.15} \\
    GPF (0.01\%) & 74.67{\scriptsize$\pm$1.91} & 59.71{\scriptsize$\pm$0.62} & 71.48{\scriptsize$\pm$0.35} & 60.49{\scriptsize$\pm$0.29} & 62.57{\scriptsize$\pm$0.84} & 65.17{\scriptsize$\pm$1.20} & 56.99{\scriptsize$\pm$3.09} & 64.44{\scriptsize$\pm$1.19} \\
    LoRA (5.0\%) & 74.87{\scriptsize$\pm$1.16} & 62.52{\scriptsize$\pm$1.04} & 70.87{\scriptsize$\pm$0.49} & 61.15{\scriptsize$\pm$0.41} & 61.67{\scriptsize$\pm$1.15} & 65.05{\scriptsize$\pm$0.83} & 57.10{\scriptsize$\pm$4.54} & 64.75{\scriptsize$\pm$1.37} \\
    AdapterGNN (5.7\%) & 75.14{\scriptsize$\pm$1.51} & 62.31{\scriptsize$\pm$2.08} & 70.23{\scriptsize$\pm$0.00} & 60.25{\scriptsize$\pm$0.54} & 58.92{\scriptsize$\pm$0.99} & 67.77{\scriptsize$\pm$1.14} & 60.90{\scriptsize$\pm$1.94} & 65.07{\scriptsize$\pm$1.17} \\
    \midrule
    AGP (2.6\%) & \textbf{81.24}{\scriptsize$\pm$1.76} & \textbf{70.47}{\scriptsize$\pm$1.18} & \textbf{75.87}{\scriptsize$\pm$0.35} & \textbf{63.80}{\scriptsize$\pm$0.68} & \textbf{63.77}{\scriptsize$\pm$1.28} & \textbf{79.55}{\scriptsize$\pm$1.52} & \textbf{71.93}{\scriptsize$\pm$3.74} & \textbf{72.38}{\scriptsize$\pm$1.50} \\
    \bottomrule
  \end{tabular}
\end{table*}

\begin{table*}[htbp]
  \centering
  \caption{Comparison of robustness performance across seven datasets under hybrid attacks. Bold indicates the best method. `Avg.' represents the average across all datasets.}
  \label{tab:result_hybrid}
  \small
  \renewcommand{\arraystretch}{1.2}
  \begin{tabular}{l|ccccccc|c}
    \toprule
    Methods & BACE & BBBP & Tox21 & ToxCast & SIDER & HIV & ClinTox & \textbf{Avg.} \\
    \midrule
    FT (100\%) & 59.42{\scriptsize$\pm$1.66} & 54.38{\scriptsize$\pm$1.54} & 64.11{\scriptsize$\pm$0.37} & 57.70{\scriptsize$\pm$0.18} & 55.40{\scriptsize$\pm$1.38} & 60.46{\scriptsize$\pm$1.07} & 50.03{\scriptsize$\pm$2.52} & 57.36{\scriptsize$\pm$1.25} \\
    BitFit (0.2\%) & 49.10{\scriptsize$\pm$2.58} & 50.75{\scriptsize$\pm$1.32} & 60.85{\scriptsize$\pm$0.60} & 54.99{\scriptsize$\pm$0.63} & 56.46{\scriptsize$\pm$0.53} & 54.61{\scriptsize$\pm$1.84} & 49.09{\scriptsize$\pm$2.52} & 53.69{\scriptsize$\pm$1.43} \\
    GPF (0.01\%) & 49.75{\scriptsize$\pm$1.75} & 50.75{\scriptsize$\pm$0.99} & 60.73{\scriptsize$\pm$0.98} & 54.89{\scriptsize$\pm$0.40} & 56.34{\scriptsize$\pm$0.79} & 54.51{\scriptsize$\pm$1.96} & 47.98{\scriptsize$\pm$1.82} & 53.56{\scriptsize$\pm$1.24} \\
    LoRA (5.0\%) & 54.86{\scriptsize$\pm$1.80} & 52.94{\scriptsize$\pm$0.44} & 62.84{\scriptsize$\pm$0.83} & 56.79{\scriptsize$\pm$0.30} & 56.29{\scriptsize$\pm$1.68} & 54.90{\scriptsize$\pm$0.80} & 51.31{\scriptsize$\pm$3.89} & 55.69{\scriptsize$\pm$1.39} \\
    AdapterGNN(5.0\%) & 60.34{\scriptsize$\pm$1.85} & 57.20{\scriptsize$\pm$1.36} & 64.10{\scriptsize$\pm$0.44} & 57.20{\scriptsize$\pm$0.40} & 54.38{\scriptsize$\pm$1.18} & 59.63{\scriptsize$\pm$0.93} & 56.28{\scriptsize$\pm$2.17} & 58.45{\scriptsize$\pm$1.19} \\
    \midrule
    AGP (2.6\%) & \textbf{76.39{\scriptsize$\pm$1.42}} & \textbf{66.18{\scriptsize$\pm$1.12}} & \textbf{73.88{\scriptsize$\pm$0.52}} & \textbf{62.86{\scriptsize$\pm$0.69}} & \textbf{57.07{\scriptsize$\pm$1.10}} & \textbf{75.27{\scriptsize$\pm$0.87}} & \textbf{70.86{\scriptsize$\pm$2.03}} & \textbf{68.93{\scriptsize$\pm$1.11}} \\
    \bottomrule
  \end{tabular}
\end{table*}

\subsubsection{Baseline Methods}
We compare the proposed approach against 
five representative fine-tuning methods:
\begin{itemize}
    \item Full Fine-Tuning (FT). It is the traditional tuning method that updates all model parameters during fine-tuning phrase.
    
    \item BitFit~\cite{zaken2022bitfit}: It only fine-tunes the bias terms in the model while keeping the pre-trained weights fixed.
    
    \item Graph Prompt Fine-tuning (GPF)~\cite{GPF}. This method introduces a learnable prompt vector, which is added to each node's original features. During fine-tuning, only the parameters of the prompt vector are updated.

    \item LoRA~\cite{hulora}.
   This method employs a parallel LoRA module with a bottleneck architecture. Consistent with the setup used in prior work~\cite{AdapterGNN}, the LoRA module is positioned both before and after the message passing operation in each GNN layer.
    
    \item AdapterGNN~\cite{AdapterGNN}:
  This method introduces dual adapter modules in parallel with the pre-trained GNN. It fine-tunes the parameters of the bias terms in the MLP, BatchNorm layers, and the adapter modules.
    
\end{itemize}
For all baseline methods, we adopt the optimal hyperparameter settings reported in their respective original publications. To ensure a fair comparison, all baselines are evaluated after fine-tuning for 100 epochs.

\subsubsection{Attack Settings}
To comprehensively evaluate the robustness of AGP,
we test the ROC-AUC of model under three different adversarial noise scenarios:  
\begin{itemize}
    \item Node Attack: Implemented using the standard PGD method~\cite{PGD} with perturb step $K =10$ and perturb radius $\epsilon= 0.8$. For dataset BBBP and CLINTOX, we set the learn rate $\alpha= 0.005$. 
    For other datasets, we set the learn rate $\alpha= 0.01$. 

    \item Topology Attack: Conducted using a discrete PGD algorithm~\cite{GraphPGD} with perturb step $K =10$ and perturbation ratio $r = 40\%$. For dataset BBBP and CLINTOX, we set the learn rate $\beta= 30$. For other datasets, we set the learn rate $\beta =100$. 
    
    \item Hybrid Attacks: Performed using the proposed JointPGD method, with parameters consistent with the individual node and topology attacks above.
\end{itemize}
To assess the model's effectiveness, we adopt `ROC-AUC' as the primary evaluation metric.
We report results both on clean data and adversarial data to measure standard performance and robustness, respectively.
Due to the randomness in the initial noise matrix for node attacks and the probabilistic sampling step in topology attacks, we execute each attack five times under every network initialization and report the average results.

\subsubsection{Model Settings}

For graph classification on chemical datasets, we adopt a 5-layer GIN~\cite{GIN} backbone with `mean' pooling strategy, following common practice in prior works~\cite{GPF,AdapterGNN}. 
Independent prompt modules are inserted at the input of each GIN layer to help defend against adversarial noise.
A three-layer MLP is added at the end of the model for downstream classification. 
The hidden dimension is set to 300 and the bottleneck dimension in each prompt module is set to 64. 
During fine-tuning, the parameters of the pretrained GIN encoder are frozen, while only the graph prompt modules and the classifier are updated, as shown in Fig.~\ref{fig:architecture}. 
Model is trained by the Adam optimizer~\cite{Adam}, configured with a learning rate of $0.001$ and a weight decay coefficient of $0$.
To ensure stable adversarial optimization, AGP is trained with 30 warm-up epochs on clean data to initialize the prompt parameters before adversarial fine-tuning. 
The hyperparameters $\gamma$ and $\eta$ are set to 0.3 and 0.6, respectively. 
For fair comparison, the pre-trained model are initialized through Infomax strategy~\cite{infograph} from the same publicly available pretrained GIN model~\cite{Hu2020Strategies}, which can be downloaded from the website \footnote{https://github.com/snap-stanford/pretrain-gnns\label{foot1}}.

\subsection{Comparison results}

Table \ref{tab:result_node}-\ref{tab:result_acc} report the performance of full-parameter fine-tuning and four PEFT strategies across seven molecular property prediction datasets under node, topology and hybrid attacks, respectively. 
Notably, the proposed AGP not only achieves superior average robustness against various attacks but also gains improvements in generalization performance on clean datasets.
To be specific,
(i) across all three types of adversarial attacks, the proposed AGP consistently achieves the highest robustness on most datasets, clearly outperforming existing fine-tuning and parameter-efficient tuning approaches. 
This trend demonstrates that incorporating adversarial training into the PEFT framework can substantially enhance model robustness, enabling the model to better resist both node and topology attacks.
(ii) AGP also provides competitive or improved accuracy on original clean datasets.
We attribute these marginal improvements to two primary factors. First, the inherent noise within the original datasets may be effectively mitigated by our robust prompting. Second, for downstream tasks with limited sample sizes, the generated adversarial examples serve as an effective form of data augmentation, thereby enhancing learning performance. 
(iii) 
For AGP-S (the single-prompt-layer variant of AGP), the results show that it also enhances robustness under node attack (Table~\ref{tab:result_node}), which is consistent with the analysis in $\S V$. 
However, due to the limited learnable parameters, its average performance gains in terms of robustness and accuracy are slightly lower than the proposed AGP. 

\begin{table*}[!htbp]
  \centering
  \caption{Comparison of ROC-AUC results across seven datasets. `AGP-Node', `AGP-Topology', and `AGP-Hybrid' denote the results obtained by AGP through adversarial training under node feature attack, graph topology attack, and hybrid attack scenarios, respectively. Bold indicates the best method. `Avg.' represents the average across all datasets.
  }
  \label{tab:result_acc}
  \small
  \renewcommand{\arraystretch}{1.2}
  \begin{tabular}{l|ccccccc|c}
    \toprule
    Methods & BACE & BBBP & TOX21 & TOXCAST & SIDER & HIV & CLINTOX & \textbf{Avg.} \\
    \midrule
    FT (100\%)  & 80.68{\scriptsize$\pm$1.10} & 67.59{\scriptsize$\pm$1.13} & 76.24{\scriptsize$\pm$0.49} & 64.49{\scriptsize$\pm$0.24} & 63.96{\scriptsize$\pm$1.09} & 75.66{\scriptsize$\pm$1.08} & 74.77{\scriptsize$\pm$1.76} & 71.91{\scriptsize$\pm$0.98} \\
    BitFit (0.2\%) & 79.34{\scriptsize$\pm$1.26} & 65.07{\scriptsize$\pm$0.63} & 77.46{\scriptsize$\pm$0.44} & 65.32{\scriptsize$\pm$0.57} & 65.20{\scriptsize$\pm$0.98} & 72.40{\scriptsize$\pm$1.98} & 72.77{\scriptsize$\pm$1.93} & 71.08{\scriptsize$\pm$1.11} \\
    GPF (0.01\%) & 80.80{\scriptsize$\pm$1.89} & 64.69{\scriptsize$\pm$0.88} & 77.89{\scriptsize$\pm$0.62} & 65.23{\scriptsize$\pm$0.35} & 65.93{\scriptsize$\pm$0.56} & 72.96{\scriptsize$\pm$1.25} & 72.50{\scriptsize$\pm$3.93} & 71.43{\scriptsize$\pm$1.35} \\
    LoRA (5.0\%) & 81.17{\scriptsize$\pm$0.94} & 66.26{\scriptsize$\pm$0.69} & 77.57{\scriptsize$\pm$0.70} & 65.08{\scriptsize$\pm$0.48} & 65.42{\scriptsize$\pm$1.27} & 72.20{\scriptsize$\pm$0.95} & 74.61{\scriptsize$\pm$3.67} & 71.76{\scriptsize$\pm$1.24} \\
    AdapterGNN (5.0\%) & 81.58{\scriptsize$\pm$1.16} & 67.90{\scriptsize$\pm$1.14} & 76.32{\scriptsize$\pm$0.56} & 64.28{\scriptsize$\pm$0.48} & 63.51{\scriptsize$\pm$0.60} & 75.32{\scriptsize$\pm$0.93} & 76.55{\scriptsize$\pm$3.29} & 72.21{\scriptsize$\pm$1.17} \\
    \midrule
    AGP-S (0.5\%)  & 82.03{\scriptsize$\pm$1.30} & 67.27{\scriptsize$\pm$0.69} & \textbf{78.15{\scriptsize$\pm$0.32}} & \textbf{66.16{\scriptsize$\pm$0.54}} & \textbf{67.11{\scriptsize$\pm$0.24}} & 75.92{\scriptsize$\pm$0.64} & 73.08{\scriptsize$\pm$1.58} & 72.82{\scriptsize$\pm$0.76} \\
    AGP-Node (2.6\%) & \textbf{83.00{\scriptsize$\pm$1.73}} & \textbf{70.40{\scriptsize$\pm$0.59}} & 77.82{\scriptsize$\pm$0.31} & 65.53{\scriptsize$\pm$0.42} & 64.17{\scriptsize$\pm$0.58} & \textbf{78.20{\scriptsize$\pm$0.42}} & 76.45{\scriptsize$\pm$2.34} & \textbf{73.65{\scriptsize$\pm$0.91}} \\
    AGP-Topology (2.6\%)& 82.12{\scriptsize$\pm$0.96} & 70.24{\scriptsize$\pm$1.20} & 77.19{\scriptsize$\pm$0.19} & 65.34{\scriptsize$\pm$0.32} & 65.47{\scriptsize$\pm$0.89} & 77.47{\scriptsize$\pm$1.24} & 75.47{\scriptsize$\pm$3.54} & 73.33{\scriptsize$\pm$1.19} \\
    AGP-Hybrid (2.6\%)& 82.00{\scriptsize$\pm$0.77} & 70.38{\scriptsize$\pm$0.39} & 77.64{\scriptsize$\pm$0.39} & 65.14{\scriptsize$\pm$0.60} & 64.58{\scriptsize$\pm$1.74} & 76.21{\scriptsize$\pm$1.19} & \textbf{77.05{\scriptsize$\pm$2.18}} & 73.29{\scriptsize$\pm$1.04} \\
    \bottomrule
  \end{tabular}
\end{table*}

\subsection{Model analysis}
\subsubsection{Pre-train model}
\begin{figure*}[!htpb]
\centering
\includegraphics[width=1.0\textwidth]{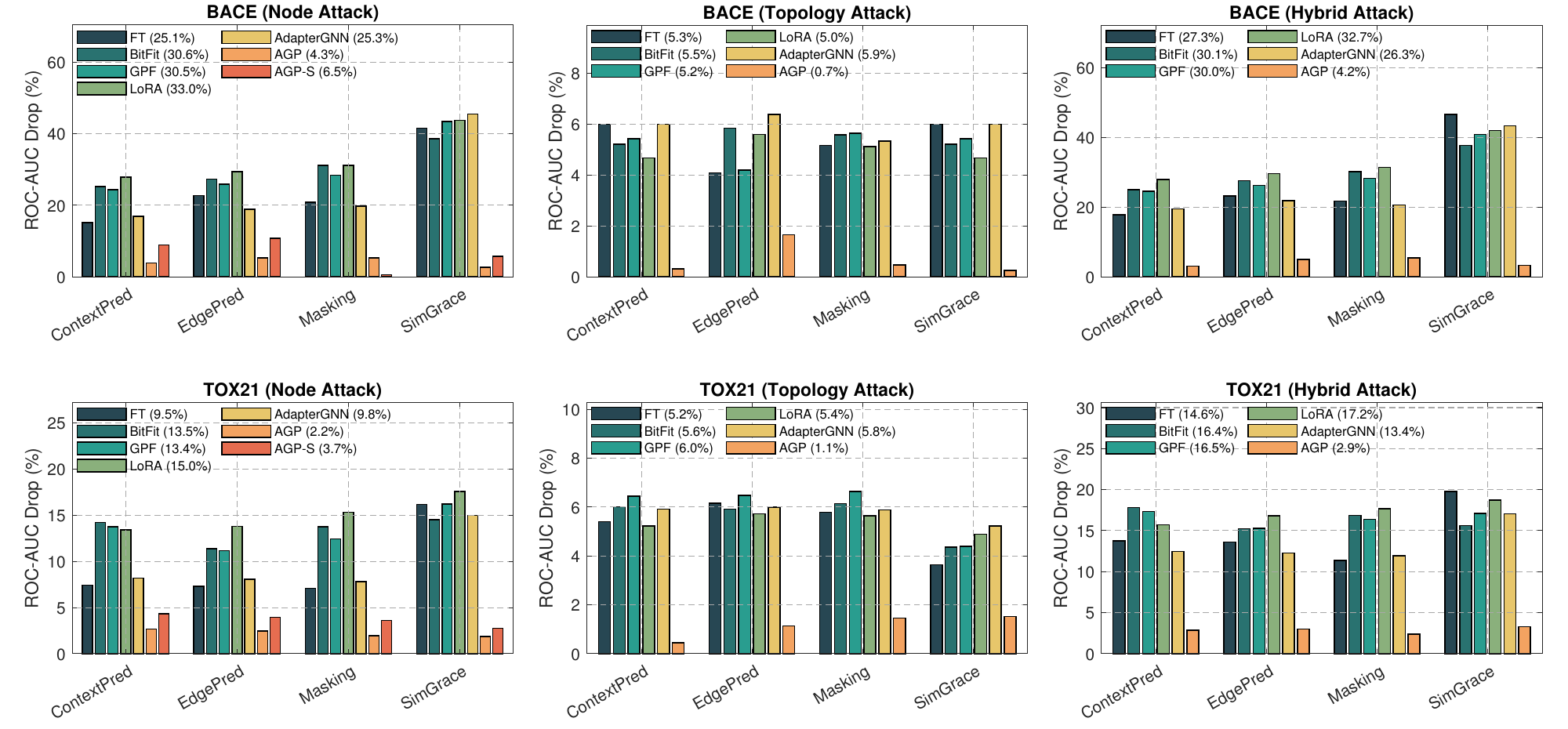}
 \caption{Robustness evaluation of fine-tuning strategies across varying pre-training models on BACE and TOX21 datasets. The legend displays the average results under various pre-training strategies.
 }
\label{fig:pretrain}
\end{figure*}
To verify the universality of our approach with respect to various pre-trained backbones, we evaluate the robustness against node, topology, and hybrid attacks on the BACE and TOX21 datasets with four different pre-training methods, including ContextPred~\cite{Hu2020Strategies}, EdgePred~\cite{graphsage}, Masking~\cite{Hu2020Strategies} and SimGrace~\cite{simgrace}.
To ensure a fair comparison, all pre-trained model parameters are sourced from publicly available websites\textsuperscript{\ref{foot1}}\footnote{https://github.com/junxia97/SimGRACE}.
The robustness is quantified by the `ROC-AUC Drop', defined as the performance degradation from the clean data to the attacked data. The lower value indicates higher robustness.
As visualized in Fig.~\ref{fig:pretrain}, a key observation is that standard fine-tuning (FT) and existing PEFT methods (e.g., GPF~\cite{GPF}, LoRA~\cite{hulora}, and AdapterGNN~\cite{AdapterGNN}) are highly susceptible to three kinds of adversarial attacks across all pre-training models.
For instance, on the BACE dataset, a hybrid attack typically causes the performance of existing models to drop by more than 20\%.
In contrast, the proposed AGP and AGP-S methods demonstrate universally superior robustness across all pre-training models. 
Even under the same attack conditions, they keep the drop in ROC-AUC within a 5\% range on most datasets.
This result indicates that AGP provides a generalized robustness enhancement across different pre-training scenarios.

\subsubsection{Efficiency analysis}
To analyze the efficiency of our proposed AGP, we compare it with baseline methods on the number of tunable parameters, GPU memory usage and inference latency. We provide the ratio of tunable parameters to fully fine-tune parameters for each method in Table~\ref{tab:result_node}-\ref{tab:result_acc}. Fig.~\ref{fig:effecy} shows the GPU memory usage and inference latency of each method on TOX21 and HIV datasets. 
In terms of tunable parameters, AGP requires only 2.6\% of the full-tuning (FT) parameters, while the simplified single-layer variant AGP-S further reduces this ratio to 0.5\%, which is lower than common PEFT baselines such as LoRA (5.0\%) and AdapterGNN (5.7\%). 
In terms of computational efficiency, both AGP and AGP-S demonstrate competitive performance regarding GPU memory usage and inference latency. Specifically, AGP-S achieves inference times of 0.67$\mu$s and 0.65$\mu$s, which are comparable to those of the lightweight baseline GPF (0.67$\mu$s and 0.64$\mu$s) on TOX21 and HIV datasets. While AGP incurs a marginal increase in latency due to its multi-layer architecture, it remains highly efficient. Furthermore, AGP-S maintains a low memory usage, aligning closely with three lightweight methods (FT, GPF and BitFit). 
Moreover, when compared against other multi-layer architectures like LoRA and AdapterGNN, AGP exhibits competitive GPU memory efficiency.
Overall, 
these results show that the proposed methods can maintain high competitiveness in terms of efficiency among existing PEFT methods.
\begin{figure*}[!htpb]
\centering
\includegraphics[width=1.0\textwidth]{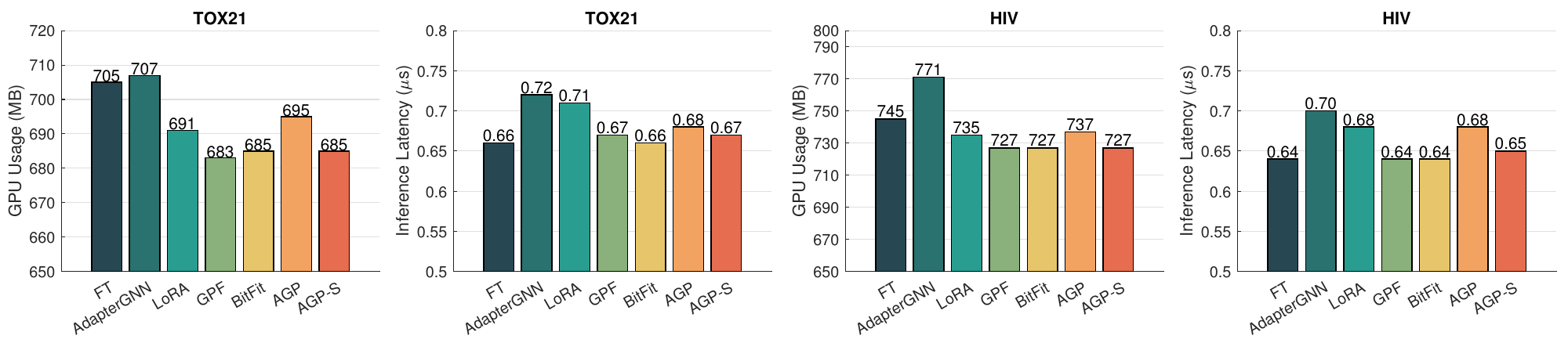}
 \caption{Comparison of GPU memory usages and inference latency across different fine-tuning methods.
 }
\label{fig:effecy}
\end{figure*}

\subsubsection{Bottleneck dimension}

In this section, we investigate the impact of the bottleneck dimension on the robustness and accuracy of AGP. Fig.~\ref{fig:bottleneck} illustrates the performance of AGP across varying bottleneck dimensions on the BACE and TOX21 datasets under hybrid attack. Specifically, an excessively small dimension (e.g., $d=1$) restricts the model's representational capacity, leading to underfitting. Conversely, an overly large dimension (e.g., $d=150$) induces overfitting, resulting in performance degradation. Consequently, a bottleneck dimension of approximately 60 achieves an optimal balance between parameter efficiency and model performance. Notably, even with a highly constrained bottleneck (e.g., $d=1$), AGP surpasses the full fine-tuning baseline on both datasets.
\begin{figure}[!htpb]
\centering
\includegraphics[width=0.5\textwidth]{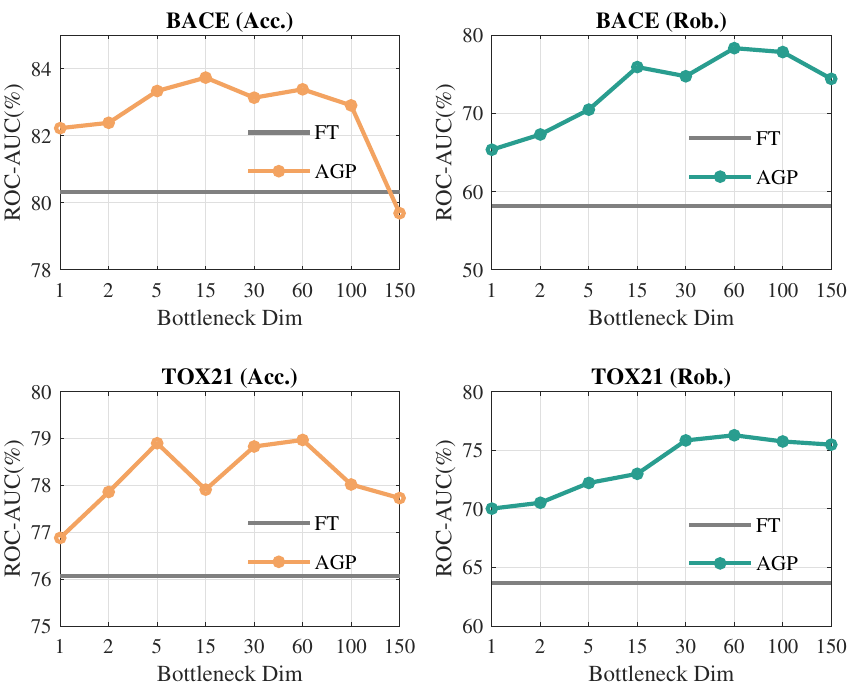}
 \caption{Convergence behavior of the loss function during model training. `Acc.' and `Rob.' denote the model's performance on clean and adversarial graph data, respectively.
 }
\label{fig:bottleneck}
\end{figure}

\subsubsection{Loss analysis}
\begin{figure}[!htpb]
\centering
\includegraphics[width=0.5\textwidth]{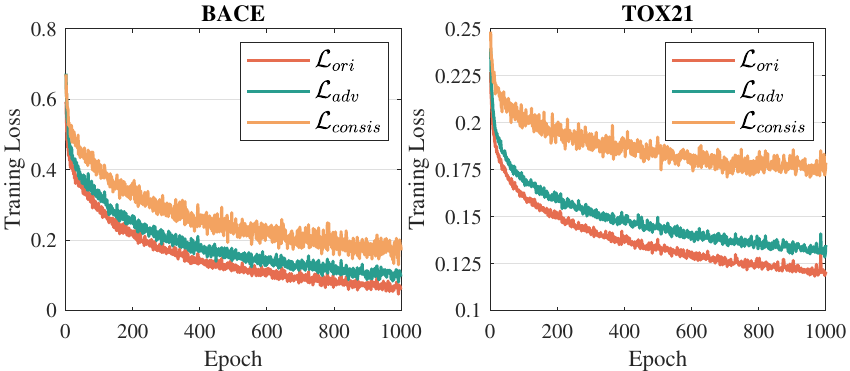}
 \caption{Convergence curves of three different loss functions on the BACE and TOX21 datasets.
 }
\label{fig:loss}
\end{figure}
\begin{table}[h!]
\caption{Ablation study of the three loss components on the BACE and Tox21 datasets. `Acc.' and `Rob.' denote the model's performance on clean and adversarial graph data, respectively.}
\centering
\small
\renewcommand{\arraystretch}{1.3}
\label{tab:loss}
\begin{tabular}{ccc|cc|cc}
\hline  
\hline
\multirow{2}{*}{$\mathcal{L}_{ori}$}&
\multirow{2}{*}{$\mathcal{L}_{adv}$}&
\multirow{2}{*}{$\mathcal{L}_{consis}$}
& \multicolumn{2}{c|}{BACE} 
& \multicolumn{2}{c}{TOX21} \\
\cline{4-7}
 &  &  &Rob. & Acc. & Rob. & Acc. \\
\hline
$\surd$   &    &     & 54.52 & 81.46 & 62.5  & 75.8 \\
    &$\surd$   &     & 77.42 & 78.67 & 72.55 & 75.16 \\
    &     & $\surd$   & 68.97 & 72.47 & 65.48 & 76.17 \\
$\surd$ & $\surd$ &     & 72.71 & 80.32 & 70.97 & 76.66 \\
$\surd$ &     & $\surd$ & 74.32 & 82.51 & 69.06 & 76.99 \\
    & $\surd$ & $\surd$ & \textbf{78.27} & 81.26 & \textbf{74.51} & 76.84 \\
$\surd$ & $\surd$ & $\surd$ & 78.12 & \textbf{82.86} & 74.03 & \textbf{77.99} \\
\hline\hline
\end{tabular}
\end{table}
Fig.~\ref{fig:loss} illustrates the convergence curves of the three distinct loss components on the BACE and TOX21 datasets.
The consistent decrease and eventual stabilization of all curves verify the training stability of the proposed AGP framework.
To further dissect the contribution of each component, Table~\ref{tab:loss} presents the results regarding robustness (`Rob.') and clean accuracy (`Acc.') on BACE and TOX21. 
Several consistent trends can be observed. 
First, training solely with $\mathcal{L}_{ori}$ yields the lowest robustness on both datasets, confirming that clean-only supervision is insufficient to defend against adversarial attacks. 
Second, incorporating $\mathcal{L}_{adv}$ leads to a substantial boost in `Rob.' (e.g., from 54.52\% to 72.71\% on BACE), validating the effectiveness of adversarial supervision.
However, this gain often comes at the cost of reduced `Acc'. 
Crucially, introducing the consistency regularization ($\mathcal{L}_{consis}$) mitigates this trade-off: when combined with either $\mathcal{L}_{ori}$ or $\mathcal{L}_{adv}$, it helps stabilize performance and yields more balanced improvements.
Finally, the full objective, which jointly optimizes all three losses, achieves the best overall performance, delivering competitive `Rob.' and `Acc.' across both benchmarks. These results suggest that the consistency constraint effectively aligns the clean and adversarial domains, enabling the model to retain clean accuracy while enhancing robustness.


\section{Conclusion}
In this paper, we propose a novel framework 
Adversarial Graph Prompting (AGP) to improve the robustness of parameter-efficient GNN fine-tuning against hybrid adversarial noise. 
The proposed AGP is formulated as a min-max optimization problem and addressed by iteratively optimizing between inner maximization and outer minimization.   
We have provided a theoretical analysis to illustrate how AGP mitigates noise in both graph topology and node features.
Extensive experiments demonstrate that AGP significantly outperforms five representative fine-tuning methods in terms of both robustness and accuracy.




\bibliography{Reference}

@inproceedings{Adam,
  title={ Adam: A method for stochastic optimization},
  author={Kingma, Diederik P and Ba, Jimmy},
  booktitle={International Conference on Learning Representations},
  year={2015}
}

@inproceedings{GCN,
  title={Semi-supervised classification with graph convolutional networks},
  author={Kipf, Thomas N and Welling, Max},
  booktitle={International Conference on Learning Representations},
  year={2017}
}

@inproceedings{GIN,
title={How Powerful are Graph Neural Networks?},
author={Keyulu Xu and Weihua Hu and Jure Leskovec and Stefanie Jegelka},
booktitle={International Conference on Learning Representations},
year={2019}
}

@inproceedings{cuco,
  title={CuCo: Graph representation with curriculum contrastive learning},
  author={Chu, Guanyi and Wang, Xiao and Shi, Chuan and Jiang, Xunqiang},
  booktitle={Proceedings of the 30-th International Joint Conference on Artificial Intelligence},
  pages={2300--2306},
  year={2021}
}

@inproceedings{infograph,
  title={InfoGraph: Unsupervised and Semi-supervised Graph-Level Representation Learning via Mutual Information Maximization},
  author={Sun, Fan-Yun and Hoffman, Jordon and Verma, Vikas and Tang, Jian},
  booktitle={International Conference on Learning Representations},
  year={2020}
}

@inproceedings{graphcl,
 author = {You, Yuning and Chen, Tianlong and Sui, Yongduo and Chen, Ting and Wang, Zhangyang and Shen, Yang},
 booktitle = {Advances in Neural Information Processing Systems},
 pages = {5812--5823},
 title = {Graph Contrastive Learning with Augmentations},
 volume = {33},
 year = {2020}
}

@inproceedings{datasets,
author = {Yanardag, Pinar and Vishwanathan, S.V.N.},
title = {Deep Graph Kernels},
year = {2015},
booktitle = {ACM SIGKDD International Conference on Knowledge Discovery and Data Mining},
pages = {1365--1374},
numpages = {10}
}

@inproceedings{graphsage,
 author = {Hamilton, Will and Ying, Zhitao and Leskovec, Jure},
 booktitle = {Advances in Neural Information Processing Systems},
 pages = {1025--1035},
 title = {Inductive Representation Learning on Large Graphs},
 volume = {30},
 year = {2017}
}

@inproceedings{nettack,
author = {Z\"{u}gner, Daniel and Akbarnejad, Amir and G\"{u}nnemann, Stephan},
title = {Adversarial Attacks on Neural Networks for Graph Data},
year = {2018},
booktitle = {Proceedings of the 24th ACM SIGKDD International Conference on Knowledge Discovery \& Data Mining},
pages = {2847--2856},
numpages = {10},
}

@inproceedings{mettack,
title = {Adversarial Attacks on Graph Neural Networks via Meta Learning},
author={Z{\"u}gner, Daniel and G{\"u}nnemann, Stephan},
booktitle={International Conference on Learning Representations},
year = {2019}
}

@inproceedings{AdapterGNN,
  title     = {AdapterGNN: Parameter-Efficient Fine-Tuning Improves Generalization in Graph Neural Networks},
  author    = {Li, Shengrui and Han, Xueting and Bai, Jing},
  volume    = {38},
  pages     = {13600--13608},
  year      = {2024},
  booktitle = {Proceedings of the AAAI Conference on Artificial Intelligence}
}

@inproceedings{gadapter,
author = {Gui, Anchun and Ye, Jinqiang and Xiao, Han},
title = {G-Adapter: towards structure-aware parameter-efficient transfer learning for graph transformer networks},
year = {2024},
pages = {12226--12234},
booktitle = {Proceedings of the AAAI Conference on Artificial Intelligence}
}

@inproceedings{GraphLoRA,
author = {Yang, Zhe-Rui and Han, Jindong and Wang, Chang-Dong and Liu, Hao},
title = {GraphLoRA: Structure-Aware Contrastive Low-Rank Adaptation for Cross-Graph Transfer Learning},
booktitle = {Proceedings of the 31st ACM SIGKDD Conference on Knowledge Discovery and Data Mining V.1},
year = {2025},
pages = {1785--1796}
}

@ARTICLE{GGPT,
  author={Yu, Xingtong and Liu, Zhenghao and Fang, Yuan and Liu, Zemin and Chen, Sihong and Zhang, Xinming},
  journal={IEEE Transactions on Knowledge and Data Engineering}, 
  title={Generalized Graph Prompt: Toward a Unification of Pre-Training and Downstream Tasks on Graphs}, 
  year={2024},
  volume={36},
  number={11},
  pages={6237--6250}}

@inproceedings{UGP,
author = {Liu, Zemin and Yu, Xingtong and Fang, Yuan and Zhang, Xinming},
title = {GraphPrompt: Unifying Pre-Training and Downstream Tasks for Graph Neural Networks},
year = {2023},
booktitle = {Proceedings of the ACM Web Conference 2023},
pages = {417--428}
}

@inproceedings{GPPT,
  title     = {GPPT: Graph Pre-training and Prompt Tuning to Generalize Graph Neural Networks},
  author    = {Sun, Mingchen and Zhou, Kun and He, Xiangnan and Wang, Yizhou and Wang, Xiang},
  pages     = {1717--1727},
  booktitle = {Proceedings of the 28th ACM SIGKDD Conference on Knowledge Discovery and Data Mining},
  year      = {2022}
}

@inproceedings{GPF,
  title     = {Universal Prompt Tuning for Graph Neural Networks},
  booktitle={Thirty-seventh Conference on Neural Information Processing Systems},
  author    = {Fang, Tian and Zhang, Yuxuan and Yang, Yue and Wang, Chengyu and Chen, Lei},
  volume    = {36},
  pages     = {52464--52489},
  year      = {2023}
}

@inproceedings{MultiGPrompt,
author = {Yu, Xingtong and Zhou, Chang and Fang, Yuan and Zhang, Xinming},
title = {MultiGPrompt for Multi-Task Pre-Training and Prompting on Graphs},
year = {2024},
booktitle = {Proceedings of the ACM Web Conference 2024},
pages = {515--526}
}

@inproceedings{Allinone,
author = {Sun, Xiangguo and Cheng, Hong and Li, Jia and Liu, Bo and Guan, Jihong},
title = {All in One: Multi-Task Prompting for Graph Neural Networks},
year = {2023},
booktitle = {Proceedings of the 29th ACM SIGKDD Conference on Knowledge Discovery and Data Mining},
pages = {2120--2131}
}

@inproceedings{GraphControl,
author = {Zhu, Yun and Wang, Yaoke and Shi, Haizhou and Zhang, Zhenshuo and Jiao, Dian and Tang, Siliang},
title = {GraphControl: Adding Conditional Control to Universal Graph Pre-trained Models for Graph Domain Transfer Learning},
year = {2024},
booktitle = {Proceedings of the ACM Web Conference 2024},
pages = {539--550}
}

@inproceedings{EdgePrompt,
title={Edge Prompt Tuning for Graph Neural Networks},
author={Xingbo Fu and Yinhan He and Jundong Li},
booktitle={The Thirteenth International Conference on Learning Representations},
year={2025},
}

@inproceedings{ProNoG,
author = {Yu, Xingtong and Zhang, Jie and Fang, Yuan and Jiang, Renhe},
title = {Non-Homophilic Graph Pre-Training and Prompt Learning},
year = {2025},
booktitle = {Proceedings of the 31st ACM SIGKDD Conference on Knowledge Discovery and Data Mining V.1},
pages = {1844--1854}
}

@inproceedings{HetGPT,
author = {Ma, Yihong and Yan, Ning and Li, Jiayu and Mortazavi, Masood and Chawla, Nitesh V.},
title = {HetGPT: Harnessing the Power of Prompt Tuning in Pre-Trained Heterogeneous Graph Neural Networks},
year = {2024},
booktitle = {Proceedings of the ACM Web Conference 2024},
pages = {1015--1023}
}

@inproceedings{PGD,
title={Towards Deep Learning Models Resistant to Adversarial Attacks},
author={Aleksander Madry and Aleksandar Makelov and Ludwig Schmidt and Dimitris Tsipras and Adrian Vladu},
booktitle={International Conference on Learning Representations},
year={2018}
}

@inproceedings{IG,
author = {Wu, Huijun and Wang, Chen and Tyshetskiy, Yuriy and Docherty, Andrew and Lu, Kai and Zhu, Liming},
title = {Adversarial examples for graph data: deep insights into attack and defense},
year = {2019},
booktitle = {Proceedings of the 28th International Joint Conference on Artificial Intelligence},
pages = {4816--4823}
}

@inproceedings{AGC,
author = {Wang, Binghui and Gong, Neil Zhenqiang},
title = {Attacking Graph-based Classification via Manipulating the Graph Structure },
year = {2019},
booktitle = {Proceedings of the 2019 ACM SIGSAC Conference on Computer and Communications Security},
pages = {2023--2040}
}

@inproceedings{GraphPGD,
  author = {Xu, Kaidi and Chen, Hongge and Liu, Sijia and Chen, Pin-Yu and Weng, Tsui-Wei and Hong, Mingyi and Lin, Xue},
  title = {Topology attack and defense for graph neural networks: an optimization perspective},
  year = {2019},
  booktitle = {Proceedings of the 28th International Joint Conference on Artificial Intelligence},
  pages = {3961--3967}
}

@inproceedings{PGD-trade,
  title = 	 {Theoretically Principled Trade-off between Robustness and Accuracy},
  author =       {Zhang, Hongyang and Yu, Yaodong and Jiao, Jiantao and Xing, Eric and Ghaoui, Laurent El and Jordan, Michael},
  booktitle = 	 {Proceedings of the 36th International Conference on Machine Learning},
  pages = 	 {7472--7482},
  year = 	 {2019},
  volume = 	 {97}
}

@article{kwon2025dual,
  title={Dual-Targeted adversarial example in evasion attack on graph neural networks},
  author={Kwon, Hyun and Kim, Dae-Jin},
  journal={Scientific Reports},
  volume={15},
  number={1},
  pages={3912},
  year={2025}
}

@article{ZINC,
  title={ZINC 15--ligand discovery for everyone},
  author={Sterling, Teague and Irwin, John J},
  journal={Journal of chemical information and modeling},
  volume={55},
  number={11},
  pages={2324--2337},
  year={2015},
  publisher={ACS Publications}
}

@inproceedings{
Hu2020Strategies,
title={Strategies for Pre-training Graph Neural Networks},
author={Hu, Weihua and Liu, Bowen and Gomes, Joseph and Zitnik, Marinka and Liang, Percy and Pande, Vijay and Leskovec, Jure},
booktitle={International Conference on Learning Representations},
year={2020}
}

@inproceedings{zaken2022bitfit,
  title={Bitfit: Simple parameter-efficient fine-tuning for transformer-based masked language-models},
  author={Zaken, Elad Ben and Goldberg, Yoav and Ravfogel, Shauli},
  booktitle={Proceedings of the 60th Annual Meeting of the Association for Computational Linguistics (Volume 2: Short Papers)},
  pages={1--9},
  year={2022}
}

@inproceedings{hulora,
  title={LoRA: Low-Rank Adaptation of Large Language Models},
  author={Hu, Edward J and Wallis, Phillip and Allen-Zhu, Zeyuan and Li, Yuanzhi and Wang, Shean and Wang, Lu and Chen, Weizhu and others},
  booktitle={International Conference on Learning Representations},
  year={2021}
}

@inproceedings{
song2025gpromptshield,
title={{GP}romptShield: Elevating Resilience in Graph Prompt Tuning Against Adversarial Attacks},
author={Shuhan Song and Ping Li and Ming Dun and Maolei Huang and Huawei Cao and Xiaochun Ye},
booktitle={The Thirteenth International Conference on Learning Representations},
year={2025}
}

@ARTICLE{10387583,
  author={Xu, Yang and Zhu, Lei and Li, Jingjing and Li, Fengling and Shen, Heng Tao},
  journal={IEEE Transactions on Knowledge and Data Engineering}, 
  title={Temporal Social Graph Network Hashing for Efficient Recommendation}, 
  year={2024},
  volume={36},
  number={7},
  pages={3541--3555}}

@ARTICLE{10638815,
  author={Chen, Chaoqi and Wu, Yushuang and Dai, Qiyuan and Zhou, Hong-Yu and Xu, Mutian and Yang, Sibei and Han, Xiaoguang and Yu, Yizhou},
  journal={IEEE Transactions on Pattern Analysis and Machine Intelligence}, 
  title={A Survey on Graph Neural Networks and Graph Transformers in Computer Vision: A Task-Oriented Perspective}, 
  year={2024},
  volume={46},
  number={12},
  pages={10297--10318}}

@ARTICLE{ASurvey,
  author={Sun, Lichao and Dou, Yingtong and Yang, Carl and Zhang, Kai and Wang, Ji and Yu, Philip S. and He, Lifang and Li, Bo},
  journal={IEEE Transactions on Knowledge and Data Engineering}, 
  title={Adversarial Attack and Defense on Graph Data: A Survey}, 
  year={2023},
  volume={35},
  number={8},
  pages={7693--7711}}

@inproceedings{simgrace,
author = {Xia, Jun and Wu, Lirong and Chen, Jintao and Hu, Bozhen and Li, Stan Z.},
title = {SimGRACE: A Simple Framework for Graph Contrastive Learning without Data Augmentation},
year = {2022},
booktitle = {Proceedings of the ACM Web Conference 2022},
pages = {1070--1079}
}

@unpublished{li2024instanceawaregraphpromptlearning,
author = {Jiazheng Li and Jundong Li and Chuxu Zhang},
title = {Instance-Aware Graph Prompt Learning},
note = {arXiv:2411.17676},
year = {2024}
}

@inproceedings{backdoorGPT,
author = {Lin, Minhua and Zhang, Zhiwei and Dai, Enyan and Wu, Zongyu and Wang, Yilong and Zhang, Xiang and Wang, Suhang},
title = {Are You Using Reliable Graph Prompts? Trojan Prompt Attacks on Graph Neural Networks},
year = {2025},
booktitle = {Proceedings of the 31st ACM SIGKDD Conference on Knowledge Discovery and Data Mining V.2},
pages = {1729--1740},
}

@ARTICLE{7446349,
  author={Liu, Sijia and Chepuri, Sundeep Prabhakar and Fardad, Makan and Maşazade, Engin and Leus, Geert and Varshney, Pramod K.},
  journal={IEEE Transactions on Signal Processing}, 
  title={Sensor Selection for Estimation with Correlated Measurement Noise}, 
  year={2016},
  volume={64},
  number={13},
  pages={3509-3522}}
\bibliographystyle{ieeetr}

\end{document}